\def\ie{\emph{i.e. }}
\def\etal{\emph{et al. }}
\DeclareMathOperator{\sgn}{sgn}
\newcommand*{\rom}[1]{\expandafter\@slowromancap\romannumeral #1@}
\newcommand\footnoteref[1]{\protected@xdef\@thefnmark{\ref{#1}}\@footnotemark}
\newtheorem{lemma}{Lemma}
\newtheorem{defi}{Definition}
\ifwacvfinal\pagestyle{empty}\fi
\begin{document}

%%%%%%%%% TITLE
\title{Self-Orthogonality Module: \\A Network Architecture Plug-in for Learning Orthogonal Filters}

% Authors at the same institution
%\author{First Author \hspace{2cm} Second Author \\
%Institution1\\
%{\tt\small firstauthor@i1.org}
%}
% Authors at different institutions
\author{Ziming Zhang\thanks{Joint first authors for the paper.} \,\thanks{This work was done when the author was a researcher at Mitsubishi Electric Research Laboratories (MERL).} \\
Worcester Polytechnic Institute, MA\\
{\tt\small zzhang15@wpi.edu}
\and
Wenchi Ma\footnotemark[1] \hspace{1cm} Yuanwei Wu \hspace{1cm} Guanghui Wang \thanks{The work was supported in part by USDA NIFA (2019-
	67021-28996).}\\
University of Kansas, KS\\
{\tt\small \{wenchima, y262w558, ghwang\}@ku.edu}
}

\maketitle
\ifwacvfinal\thispagestyle{empty}\fi

%%%%%%%%% ABSTRACT
\begin{abstract}
% In this paper we study an empirical problem of whether and how a deep neural network can be trained well by itself without any explicit regularization technique such as dropout, weight decay, and batch normalization. We observe empirically that existing pretrained models tend to produce near orthogonal filters in each hidden layer, regardless of network architectures, data sets, and applications. Motivated by it, 

In this paper, we investigate the empirical impact of orthogonality regularization (OR) in deep learning, either solo or collaboratively. Recent works on OR showed some promising results on the accuracy. In our ablation study, however, we do not observe such significant improvement from existing OR techniques compared with the conventional training based on weight decay, dropout, and batch normalization. To identify the real gain from OR, inspired by the locality sensitive hashing (LSH) in angle estimation, we propose to introduce an implicit {\em self-regularization} into OR to push the mean and variance of filter angles in a network towards $\ang{90}$ and $\ang{0}$ simultaneously to achieve (near) orthogonality among the filters, without using any other explicit regularization. Our regularization can be implemented as an architectural plug-in and integrated with an arbitrary network. We reveal that OR helps {\em stabilize} the training process and leads to {\em faster convergence} and {\em better generalization}.
\end{abstract}

%%%%%%%%% BODY TEXT
\section{Introduction}

%Nowadays deep learning has been achieving the state-of-the-art performance in many applications such as computer vision and natural language processing. 
Nowadays deep learning has achieved the state-of-the-art performance in computer vision and natural language processing~\cite{cen2019dictionary,he2018learning, xu2019toward, xu2019adversarially, wu2019unsupervised}. Regularization in deep learning plays an important role in helping avoid bad solutions. Researchers have made great effort on this topic from different perspectives, such as data processing~\cite{ cen2019boosting,ioffe2015batch,ba2016layer,cogswell2015reducing}, network architectures~\cite{zeiler2013stochastic,he2016deep,ma2019mdfn,ma2018mdcn, wu2019unsupervised}, losses~\cite{milletari2016v}, regularizers~\cite{rodriguez2016regularizing,NIPS2017_6769,Bansal2018nips,Martin2018,xu2019towards}, and optimization~\cite{bottou1998online,hinton2012improving, NIPS2016_6114,huang2017orthogonal,Zhang_2018_CVPR}. Please refer to~\cite{kukavcka2017regularization} for a review.

To better understand the effects of regularization in deep learning, our work in this paper is mainly motivated by the following two basic yet important questions:
\begin{enumerate}[leftmargin=7mm]
    \item[Q1.]{\em With the help of regularization, what structural properties among the learned filters in hidden layers\footnote{For simplicity, in the rest of the paper we refer to a convolutional or FC layer as a hidden layer.} are good deep models supposed to have?}
    %{\em With the help of regularization, what structural properties among the learned filters (\ie weights for convolutional and full-connected (FC) layers\footnote{For simplicity, in the rest of the paper we refer to a convolutional or FC layer as a hidden layer.}) are good deep models supposed to have?}
\end{enumerate}

To answer this question (partially), we try to explore the angular properties among learned filters. We compute the angles of all filter pairs at each hidden layer in different deep models and plot these {\em angular distributions} in Fig.~\ref{fig:angle_dist}. To generate each distribution, we first uniformly and randomly draw a sample from the angle pool per hidden layer, and average all the samples to generate a model-level angular sample. We then repeat this procedure for $10^6$ times, leading to $10^6$ samples based on which we compute a (normalized) histogram as the angular distribution by quantization from $\ang{0}$ to $\ang{180}$, step by $\ang{0.1}$. All the 23 deep models~\cite{pretrained_model_link}
%\footnote{See \url{http://www.robots.ox.ac.uk/~albanie/mcn-models.html}.} 
are properly pretrained on different data sets with weight decay~\cite{goodfellow2016deep}, dropout~\cite{hinton2012improving}, and batch normalization (BN)~\cite{ioffe2015batch}. 

%\begin{wrapfigure}{r}{.75\linewidth}
\begin{figure*}[th]
% 	\vspace{-15pt}
	\begin{center}
		\includegraphics[width=.95\linewidth]{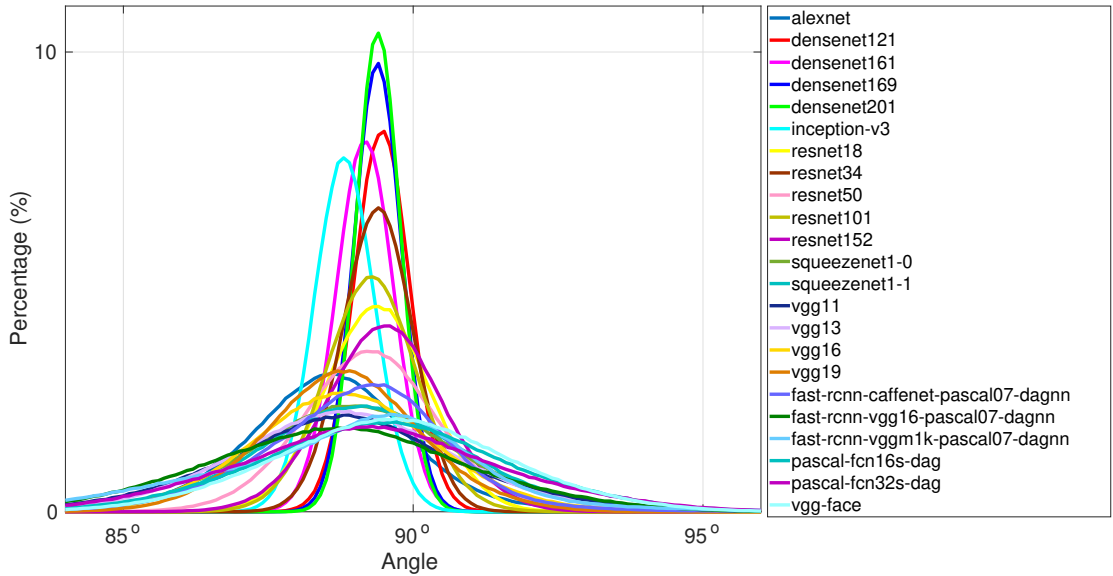}
% 		\vspace{-6mm}
		\caption{Illustration of the angular distributions of pretrained models with no OR.}
		\label{fig:angle_dist}
	\end{center}
% 	\vspace{-15pt}
%\end{wrapfigure}
\end{figure*}

As shown in Fig.~\ref{fig:angle_dist}, all the angular distributions overlap with each other heavily and behave similarly in Gaussian-like shapes with centers near $\ang{90}$ with small variances. %This observation suggests that conventional regularization techniques in deep learning seem to push some filter angles far away from \ang{90}, leading to near-orthogonality among such filters. 
Intuitively orthogonal filters are expected to best span the parameter space, especially in the high dimensional spaces where the filter dimensions are larger than the number of filters. Empirically, however, with many noisy factors such as data samples and stochastic training it may not be a good idea to strictly preserve the filter orthogonality in deep learning. In fact, the recent work in~\cite{2018arXiv181002786K} has demonstrated that on benchmark data sets, classification accuracy using orthogonal filters (learned by PCA) is inferior to that using learned filters by backpropagation (BP). Similarly another recent work in~\cite{vorontsov2017orthogonality} finds that hard constraints on orthogonality can negatively affect the convergence speed and model performance in training of recurrent neural networks (RNNs), but soft orthogonality can improve the training.

In summary, the comparison on the angular distributions of pretrained deep models in Fig.~\ref{fig:angle_dist} reveal that deep learning itself may have some internal mechanism to learn nearly orthogonal filters due to its high dimensional parameter spaces, even without any external orthogonal regularization (OR). %The mean and variance of each angular distribution seems robust across different models and data sets as a strong {\em prior}. So far we have not seen any deep model whose angular distribution violates this prior.

\begin{enumerate}[leftmargin=7mm]
	\item[Q2.]{\em What are the intrinsic benefits from learning orthogonal filters in deep learning based on OR?}
\end{enumerate}

We notice that recently OR has been attracting more and more attention~\cite{rodriguez2016regularizing, zhang2016hybrid, vorontsov2017orthogonality, huang2017orthogonal, Bansal2018nips, 2018arXiv181002786K}, some of which~\cite{rodriguez2016regularizing, vorontsov2017orthogonality, huang2017orthogonal} have released their code. Interestingly, from their code we find that the proposed OR is evaluated together with other regularizers such as weight decay, dropout, and BN. We argue that such experimental settings cannot help identify how much OR contributes to the performance, compared with other regularizers, especially as we observe that the performances with or without OR are very close. Similar argument has been addressed in~\cite{van2017l2} recently where the author showed that $\ell_2$ regularization has no regularizing effect when combined with batch or weight normalization, but has an influence on the scale of weights, and thereby on the effective learning rate. %Another observation is that almost all the experiments are conducted on image classification with convolutional neural networks (CNNs). Whether OR can be generalized well to other tasks is still questionable.

In summary, it is unclear to us from existing works what is the real gain from OR in deep learning.

{\bf Contributions:} 
This paper aims to identify the real gain from OR in training different deep models on different tasks. To do so, we conduct comprehensive experiments on point cloud classification. In contrast to previous works, we separate OR from other regularization techniques to train the same networks respectively. We observe that, however, no significant improvement in accuracy occurs from existing OR techniques, statistically speaking, compared with the conventional training algorithm based on weight decay, dropout, and batch normalization. In fact, we find that, even without any regularization, a workable deep model can achieve the near orthogonality among learned filters, indicating that OR may not be necessarily useful in deep learning to improve accuracy. 

What we do observe is that sometimes the training stability using OR is improved, leading to faster convergence in training and better accuracy at test time. We manage to identify this by intentionally designing experiments in extreme learning scenarios such as large learning rate, limited training samples, and small batch sizes. Such observation, however, is not strong overall. We conjecture that this is mainly because existing OR techniques influence the deep learning {\em externally} and cannot be integrated as a part of network architectures {\em internally}. 

To verify our conjecture, we propose a {\em self-regularization} technique as a plug-in to the network architectures so that they are able to learn (nearly) orthogonal filters even without any other regularization. We borrow the idea from locality sensitive hashing (LSH)~\cite{charikar2002similarity} to approximately measure the filter angles at each hidden layer using filter responses from the network. We then push the statistics of such angles (\ie mean and variance) towards $\ang{90}$ and $\ang{0}$, respectively, as an orthogonality regularizer. We demonstrate that our internal self-regularization significantly improves the training stability, leading to faster convergence and better generalization.

%-------------------------------------------------------------------------
\subsection{Related Work}
As summarized in~\cite{kukavcka2017regularization}, there are many regularization techniques in deep learning. For instance, weight decay is essentially an $\ell_2$ regularizer over filters, dropout takes random neurons for update, and BN utilizes the statistics from mini-batches to normalize the features. Our work is more related to representation decorrelation and orthogonality regularizers in the literature. 

%\noindent
{\bf Representation Decorrelation:} Cogswell~\etal~\cite{cogswell2015reducing} proposed a regularizer, namely DeCov, to learn non-redundant representations by minimizing the cross-covariance of hidden activations. Similarly, Gu~\etal~\cite{ijcai2018-301} proposed another regularizer, namely Ensemble-based Decorrelation Method (EDM), by minimizing the covariance between all base learners (\ie hidden activations) during training. Yadav and Agarwal~\cite{yadav2017aaaiw} proposed to regularize the training of RNNs by minimizing non-diagonal elements of the correlation matrix computed over the hidden representation, leading to DeCov RNN loss and DeCov Ensemble loss. Zhu~\etal~\cite{ijcai2018-453} proposed another decorrelation regularizer based on Pearson correlation coefficient matrix working together with group LASSO to learn sparse neural networks. However, none of the previous work can guarantee that the learned filters are (nearly) orthogonal. Different from these approaches working on hidden activation (representations) by encouraging diverse or non-redundant representations, or like dropout which directly works on neurons by random screening, the proposed method essentially works on regularizing the filter parameters, specifically the weights, to update filters towards orthogonality. It is data adaptive by extracting weights' activation for the regularizer so that the update during training is data dependent. 

% On the contrary, our approach manages to learn nearly orthogonal filters provably and approximately, even though it works on hidden activations as well instead of filters.

%\noindent
{\bf Orthogonality Regularizers:} Harandi and Fernando \cite{harandi2016generalized} proposed a generalized BP algorithm to update filters on the Riemannian manifolds as well as introducing a Stiefel layer to learn orthogonal filters. Vorontsov~\etal~\cite{vorontsov2017orthogonality} verified the effect of learning orthogonal filters on RNN training that is conducted on the Stiefel manifolds. Huang~\etal~\cite{huang2017orthogonal} proposed an orthogonal weight normalization algorithm based on optimization over multiple dependent Stiefel manifolds (OMDSM). Xie~\etal~\cite{xie2017near} proposed a family of orthogonality-promoting regularizer by encouraging the Gram matrix of the functions in the reproducing kernel Hilbert spaces (RKHS) to be close to an identity matrix where the closeness is measured by Bregman matrix divergences. Rodr{\'\i}guez~\etal~\cite{rodriguez2016regularizing} proposed a regularizer called OrthoReg to enforce feature orthogonality locally based on cosine similarities of filters. Bansal~\etal~\cite{Bansal2018nips} proposed another two orthogonality regularizers based on mutual coherence and restricted isometry property over filters, respectively, and evaluated their gain in training deep models. Xie~\etal~\cite{xie2017all} demonstrated that orthonormality among filters helps alleviate the vanishing or exploring gradient issue in training extremely deep networks. Jia~\etal~\cite{jia2019orthogonal} proposed the algorithms of Orthogonal Deep Neural Networks (OrthDNNs) to connect with recent interest of spectrally regularized deep learning methods.

{\bf Self-Regularization:}
Xu~\etal~\cite{xu2018srnn} proposed a self-regularized neural networks (SRNN) by arguing that the sample-wise soft targets of a neural network may have potentials to drag its own neural network out of its local optimum. Martin \& Mahoney~\cite{Martin2018} proposed interpreting deep neural networks (DNN) from the perspective of random matrix theory (RMT) by analyzing the weight matrices in DNN. They claimed that empirical and theoretical results clearly indicate that the DNN training process itself implicitly implements a form of self-regularization, implicitly sculpting a more regularized energy or penalty landscape.

Differently, we propose introducing self-regularization into the design of OR for better understanding the truly impact of OR in deep learning. In contrast to~\cite{Martin2018} we discover the self-regularization in convolutional neural networks (CNNs) by considering their angular distributions among learned filters in the context of OR. We propose a novel efficient activation function to compute these angles.

\section{Self-Regularization as Internal Orthogonality Regularizer}

To better present our experimental results, let us first introduce our self-regularization method. Recall that inspired by the angular distributions of pretrained deep models, we aim to learn deep models with mean and variance of their angular distributions close to $\ang{90}$ and $\ang{0}$ as well. Intuitively we could use $\theta=\arccos{\frac{\mathbf{w}_n^T\mathbf{w}_n}{\|\mathbf{w}_m\|\|\mathbf{w}_n\|}}\in[0, \pi]$ to directly compute the angle between two filters $\mathbf{w}_m$ and $\mathbf{w}_n$, where $(\cdot)^T$ denotes the matrix transpose operator, and $\|\cdot\|$ denotes the $\ell_2$ norm of a vector. Empirically we observe similar behavior of this $\arccos$ based regularization to that of SRIP-v1 and SRIP-v2 (see Sec.~\ref{sec:exp} for more details). We argue that all the existing orthogonality regularizers are designed {\em data independently}, and thus lack of the ability of data adaptation in regularization.  

In contrast to the literature, we propose introducing implicit {\em self-regularization} into OR to embed the regularizer into the network architectures directly so that it can be updated {\em data dependently} during training. To this end, we actively seek for a means that can be used to estimate filter angles based on input data. Only in this way we can naturally incorporate self-regularization with network architectures. Such requirement reminds us of the connection between LSH and angle estimation, leading to the following claim:
\begin{lemma}[$\vartheta$-Space]\label{lem:1}
	Without loss of generality, let $\theta\in[0, \pi]$ be the angle between two vectors $\mathbf{w}_m, \mathbf{w}_n\in\mathbb{R}^d$, and $\mathcal{X}$ be the unit ball in the $d$-dimensional space. We then have
	\begin{align}\label{eqn:g}
		\vartheta & \stackrel{def}{=} \mathbb{E}_{\mathbf{x}\sim\mathcal{X}}\left[\sgn(\mathbf{x}^T\mathbf{w}_m)\sgn(\mathbf{x}^T\mathbf{w}_n)\right] = 1 - \frac{2\theta}{\pi},
	\end{align}
	where $\mathbb{E}$ is the expectation operator, sample $\mathbf{x}$ is uniformly sampled from $\mathcal{X}$, $\sgn:\mathbb{R}\rightarrow\{\pm1\}$ is the sign function returning 1 for positives, otherwise -1.%, and $(\cdot)^T$ is the matrix transpose operator. 
\end{lemma}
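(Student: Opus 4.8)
The plan is to recognize this as the classical locality-sensitive-hashing (random-hyperplane) identity of Goemans--Williamson / Charikar, and to prove it by reducing the high-dimensional expectation to an elementary computation on a circle. The first step is to rewrite $\vartheta$ in terms of a sign-agreement probability: since $\sgn(a)\sgn(b)=+1$ when the two inner products share a sign and $-1$ when they differ, we have $\vartheta = \Pr[\text{signs agree}] - \Pr[\text{signs differ}] = 1 - 2\Pr[\text{signs differ}]$. Hence it suffices to show that $\Pr[\text{signs differ}] = \theta/\pi$, i.e.\ that the random hyperplane with normal $\mathbf{x}$ separates $\mathbf{w}_m$ from $\mathbf{w}_n$ with probability $\theta/\pi$.

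Next I would reduce the problem to two dimensions. Assuming $\mathbf{w}_m,\mathbf{w}_n$ are linearly independent (the parallel case $\theta\in\{0,\pi\}$ is checked directly), let $V=\operatorname{span}(\mathbf{w}_m,\mathbf{w}_n)$. Because $\mathbf{x}^T\mathbf{w}_m$ and $\mathbf{x}^T\mathbf{w}_n$ depend only on the orthogonal projection $\mathbf{x}_V$ of $\mathbf{x}$ onto $V$, only the direction of $\mathbf{x}_V$ in $V$ matters. The key structural fact is that this direction is \emph{uniform} on the unit circle of $V$: rotations of $\mathbb{R}^d$ that act as planar rotations inside $V$ and fix $V^\perp$ preserve the uniform measure $\mathcal{X}$ on the sphere while rotating the angle of $\mathbf{x}_V$, which forces that angle to be uniformly distributed. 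Writing $\phi\in[0,2\pi)$ for this uniform angle and placing $\mathbf{w}_m,\mathbf{w}_n$ at angles $0$ and $\theta$, I get $\sgn(\mathbf{x}^T\mathbf{w}_m)=\sgn(\cos\phi)$ and $\sgn(\mathbf{x}^T\mathbf{w}_n)=\sgn(\cos(\phi-\theta))$.

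The computation then closes on the circle. The positive half-circle for $\mathbf{w}_m$ is the arc $\phi\in(-\tfrac{\pi}{2},\tfrac{\pi}{2})$ and for $\mathbf{w}_n$ it is the arc $(\theta-\tfrac{\pi}{2},\theta+\tfrac{\pi}{2})$; the signs differ exactly on the symmetric difference of these two length-$\pi$ arcs offset by $\theta$. Their intersection has length $\pi-\theta$ and their union length $\pi+\theta$, so the symmetric difference has measure $2\theta$, giving $\Pr[\text{signs differ}] = 2\theta/(2\pi)=\theta/\pi$. Substituting back yields $\vartheta = 1 - 2\theta/\pi$, as claimed.

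I expect the main obstacle to be making the reduction in the second step fully rigorous, namely justifying that the projected direction is exactly uniform rather than merely symmetric. I would handle this by fixing an orthonormal basis of $V$, extending it to an orthonormal basis of $\mathbb{R}^d$, and invoking rotational invariance of $\mathcal{X}$ under the block-rotation subgroup acting on $V$. A clean alternative that sidesteps the circle-geometry bookkeeping is the Gaussian route: replace $\mathbf{x}$ by a standard Gaussian $\mathbf{g}$ (legitimate since $\sgn(\mathbf{x}^T\mathbf{w})=\sgn(\mathbf{g}^T\mathbf{w})$ as $\|\mathbf{g}\|>0$), observe that $(\mathbf{g}^T\mathbf{w}_m,\mathbf{g}^T\mathbf{w}_n)$ is jointly Gaussian with correlation $\cos\theta$, and apply Sheppard's orthant formula $\Pr[\text{signs agree}]=\tfrac12+\tfrac1\pi\arcsin(\cos\theta)=1-\theta/\pi$; this again delivers $\vartheta=1-2\theta/\pi$.
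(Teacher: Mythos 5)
Your proof is correct, and it does more work than the paper's own proof, which is essentially a citation: the paper observes that $\sgn(\mathbf{x}^T\mathbf{w})$ is the random-hyperplane hash function of Charikar's LSH, imports the collision probability $P\left[\sgn(\mathbf{x}^T\mathbf{w}_m)=\sgn(\mathbf{x}^T\mathbf{w}_n)\right] = 1-\theta/\pi$ as a known fact, and then notes the linear equivalence $1-\theta/\pi = (1+\vartheta)/2$, which is exactly your first step in reverse. You instead prove the collision fact from first principles --- reduction to the two-dimensional span by rotational invariance, uniformity of the projected direction on the circle, and the arc-measure count giving a symmetric difference of measure $2\theta$ --- and you supply a second independent verification via the Gaussian substitution and Sheppard's orthant formula. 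What your route buys is self-containedness and rigor (the parallel case $\theta\in\{0,\pi\}$, the justification that the projected angle is exactly uniform); what the paper's route buys is brevity, at the cost of resting entirely on the cited result. Two small points to tidy: the paper takes $\mathcal{X}$ to be the unit \emph{ball} while your rotation argument speaks of the sphere --- this is harmless because the signs depend only on the direction of $\mathbf{x}$, and the direction of a point uniform in the ball is uniform on the sphere, but it deserves the one sentence you partly give; and both your argument and the paper's silently discard the event $\mathbf{x}^T\mathbf{w}=0$, which is fine since it has measure zero but could be stated explicitly.
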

\begin{proof}
	In fact $\sgn(\mathbf{x}^T\mathbf{w})$ defines a random hyperplane based hash function for LSH \cite{charikar2002similarity}. Therefore, 
	\begin{align}\label{eqn:rp_hash}
		P_{\mathbf{x}\sim\mathcal{X}}\left[\sgn(\mathbf{x}^T\mathbf{w}_m)=\sgn(\mathbf{x}^T\mathbf{w}_n)\right] = 1 - \frac{\theta}{\pi} = \frac{1+\vartheta}{2},
	\end{align}
	which is equivalent to Eq. \ref{eqn:g}. We then complete our proof.
\end{proof}

\subsection{Formulation}
Lemma \ref{lem:1} opens a door for us to estimate filter angles (\ie $\theta$) using filter responses (\ie $\mathbf{x}^T\mathbf{w}$). Due to linear transformation, both mean and variance in the $\theta$-space, \ie $\ang{90}$ and $\ang{0}$, are converted to $0$ in the $\vartheta$-space. Now based on the statistical relation between mean and variance, we can define our self-regularizer, $\mathcal{R}_{\vartheta}$, as follows:
\begin{align}\label{eqn:R}
\mathcal{R}_{\vartheta} \stackrel{def}{=} \sum_i\lambda_1\mathbb{E}_{\vartheta\sim\Theta_i}(\vartheta)^2 + \lambda_2\mathbb{E}_{\vartheta\sim\Theta_i}(\vartheta^2), 
\end{align}
where $\Theta_i, \forall i$ denotes the filter angle pool in the $\vartheta$-space at the $i$-th hidden layer, and $\lambda_1, \lambda_2\geq0$ are two predefined constants. Here we choose the least square loss for its simplicity, and other proper loss functions can be employed as well. In our experiments we choose $\lambda_1=100, \lambda_2=1$ by cross-validation using grid search. We observe $\lambda_2$ has much larger impact than $\lambda_1$ on performance, achieving similar accuracy with $\lambda_1\in[0,1000]$ and $\lambda_2\in[1, 10]$. This makes sense as the mean of an angular distribution in deep learning is close to $\ang{90}$ anyway, but the variance should be small.

% \begin{figure}[t]
% %\vspace{-20pt}
% \begin{center}
% \includegraphics[width=.75\columnwidth]{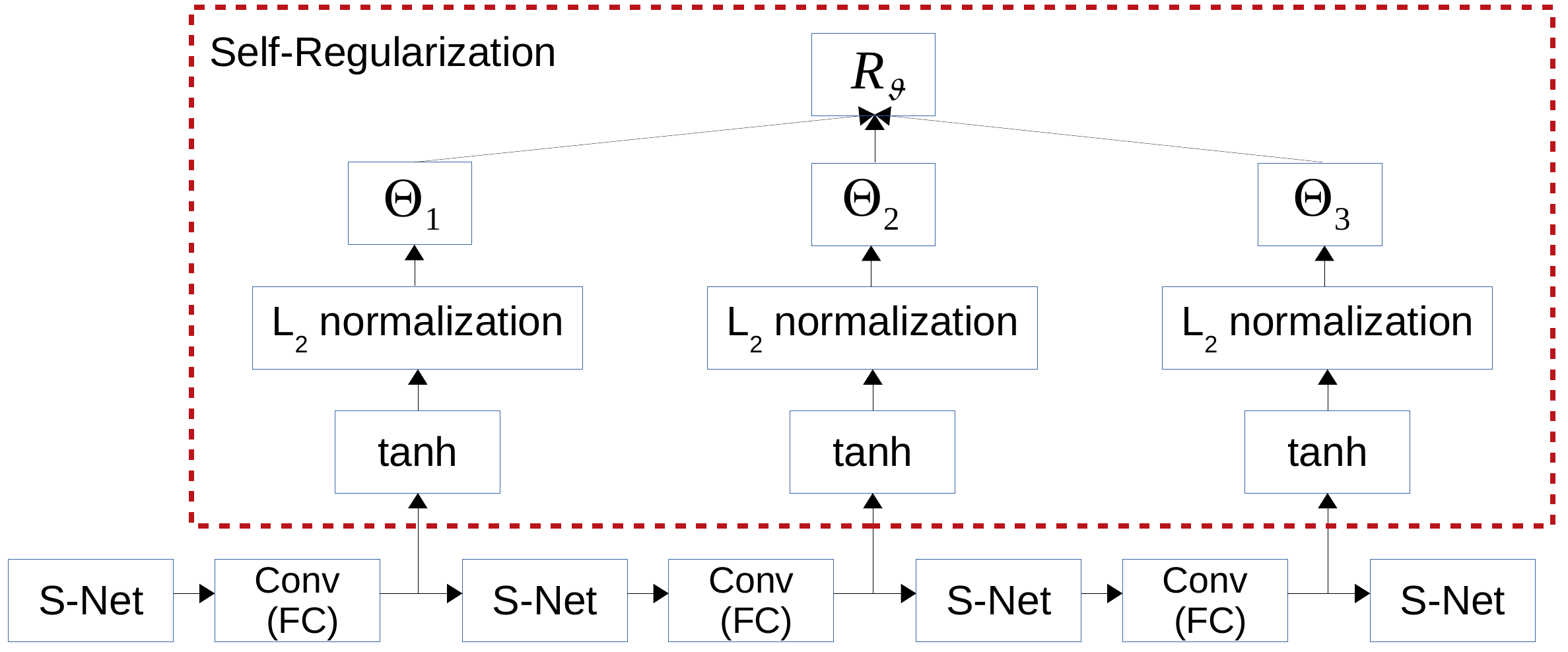}
% \vspace{-1pt}
% \caption{\footnotesize An example of integration of our self-regularization as a plug-in on a three hidden-layer DNN as backbone for training. Here ``S-Net'' denotes a sub-network consisting of non-hidden layers.}
% \label{fig:reg_net}
% \end{center}
% \vspace{-5pt}
% \end{figure} 

\subsection{Implementation}
Computing $\vartheta$ in Eq. \ref{eqn:g} is challenging because of the expectation over all possible samples in $\mathcal{X}$. To approximate $\vartheta$, we introduce the notion of normalized approximate binary activation as follows:
\begin{defi}[Normalized Approximate Binary Activation\footnote{We tested different sign approximation functions such as softsign, and observed that their performances are very close. Therefore, by referring to \cite{chang2017deep} in this paper we utilize $\tanh$ only.} (NABA)]\label{def:NABA}
	Letting $\mathbf{w}\in\mathbb{R}^d$ be a vector and $\mathbf{X}\in\mathbb{R}^{d\times D}$ be a projection matrix, then we define an NABA vector, $\mathbf{z}\in\mathbb{R}^D$, for $\mathbf{w}$ as
	\begin{align}\label{eqn:NABA}
		\mathbf{z} = \frac{\tanh\left(\gamma\mathbf{X}^T\mathbf{w}\right)}{\|\tanh(\gamma\mathbf{X}^T\mathbf{w})\|} \Rightarrow \vartheta(\mathbf{w}_m, \mathbf{w}_n)\approx\mathbf{z}_m^T\mathbf{z}_n, \forall m\neq n
	\end{align}
	where $\tanh$ is an entry-wise function, %$\|\cdot\|$ is the $\ell_2$ norm of a vector, 
	and $\gamma\geq0$ is a scalar so that $\lim_{\gamma\rightarrow+\infty}\tanh\left(\gamma x\right)=\sgn\left(x\right), \forall x\in\mathbb{R}$ as used in~\cite{chang2017deep}.
\end{defi}

%\begin{wrapfigure}{r}{.75\linewidth}
\begin{figure}[th]
% 	\vspace{-15pt}
	\begin{center}
		\includegraphics[width=\linewidth]{}
% 		\vspace{-6mm}
		\caption{An example of integration of our self-regularization as a plug-in on a three hidden-layer DNN as backbone for training. Here ``S-Net'' denotes a sub-network consisting of non-hidden layers.}
		\label{fig:reg_net}
	\end{center}
% 	\vspace{-15pt}
%\end{wrapfigure}
\end{figure}

Based on the consideration of accuracy and running speed, in our experiments we set $\gamma=10, D=16$ for Eq. \ref{eqn:NABA}. Larger $\gamma$ brings more difficulty in optimization, as demonstrated in \cite{chang2017deep}. Larger $D$ does approximate the expectation in Eq.~\ref{eqn:g} better, but has marginal effect on training loss and test accuracy, as well as leading to higher computational burden.

Implementing Eq. \ref{eqn:NABA} using networks is simple, as illustrated in Fig. \ref{fig:reg_net} where $\mathbf{X}$ denotes the input features to each convolutional (conv) or fully-connected (FC) hidden layer and $\mathbf{w}$ denotes the weights of a filter at the hidden layer. %We work on the activation of hidden layers and transform the feature outputs into a two-dimensional matrix of batch size ($d$) by the neuron dimension ($D$), which is the multiplication of channels, weights, and heights of the feature maps in the conv case. As a result, $z$ is a vector with the size of $D$ so that the filter angle approximation is hinged on the number of samples $D$, independent of the batch size. 
We work on the activation of hidden layers and transform feature outputs into a two-dimensional matrix of channel numbers by the neuron dimension, the product of batch size, and feature map's weight and height in the convolution case. Then we take samples along the neuron dimension. Accordingly, $z$ is a vector with the size of sampling number so that the filter angle approximation is workable.
Thus, both FC and conv can follow the way of $\mathbf{z}_m^T\mathbf{z}_n$. In this way, our regularization can be easily integrated into an arbitrary network seamlessly as a plug-in so that the network itself can automatically regularize its weights internally and explicitly, leading to self-regularization. 

Compared with existing OR algorithms, our self-regularization takes the advantage of the dependency between input features and filters so that the weights are learned more specifically to better fit the data. From this perspective, our self-regularization is similar to a family of batch normalization algorithms. As a consequence, we may not need any external regularization to help training.  

\begin{figure*}[t]
    \subfigure{\includegraphics[width=0.25\linewidth]{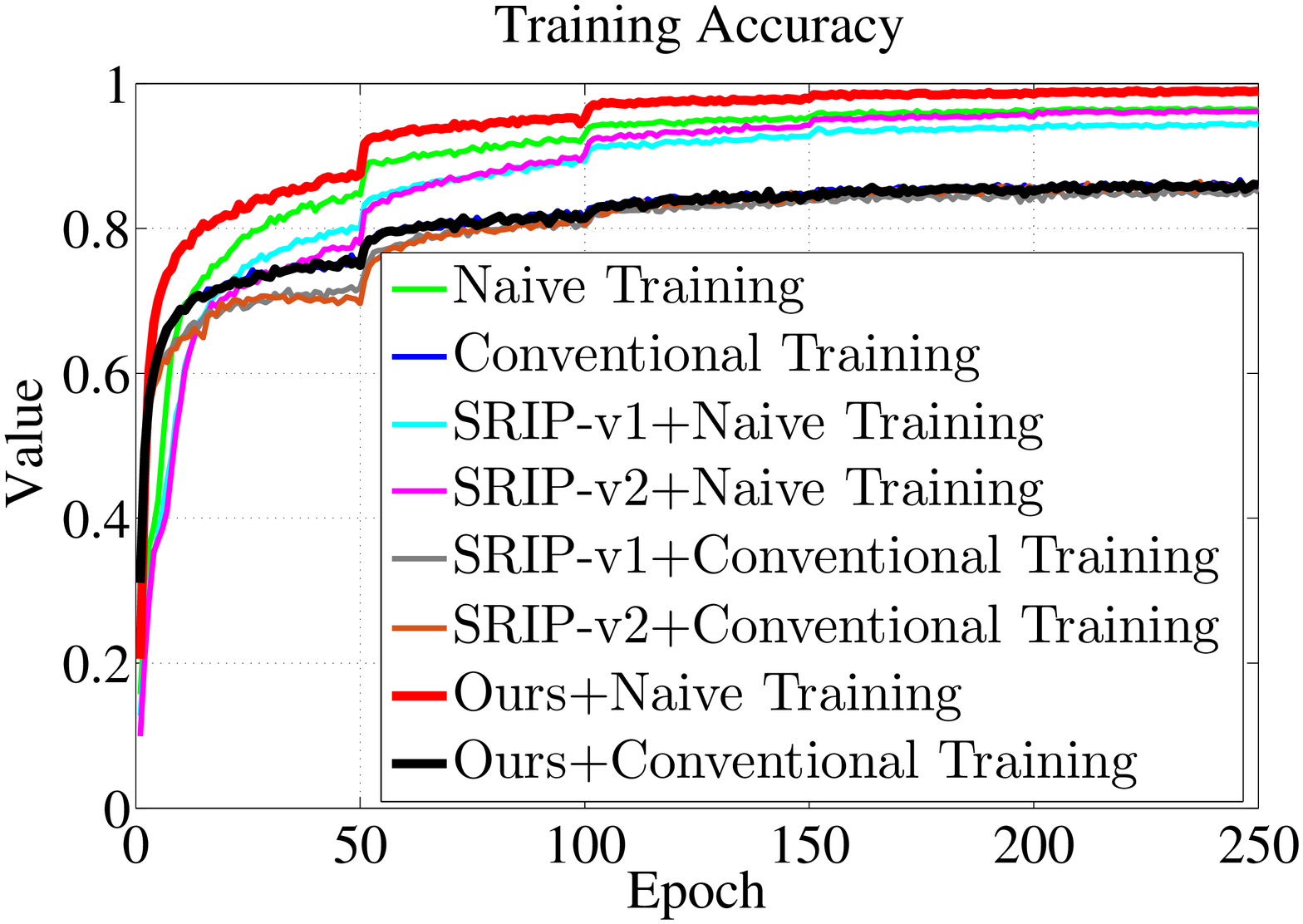}}\hfill
    \subfigure{\includegraphics[width=0.25\linewidth]{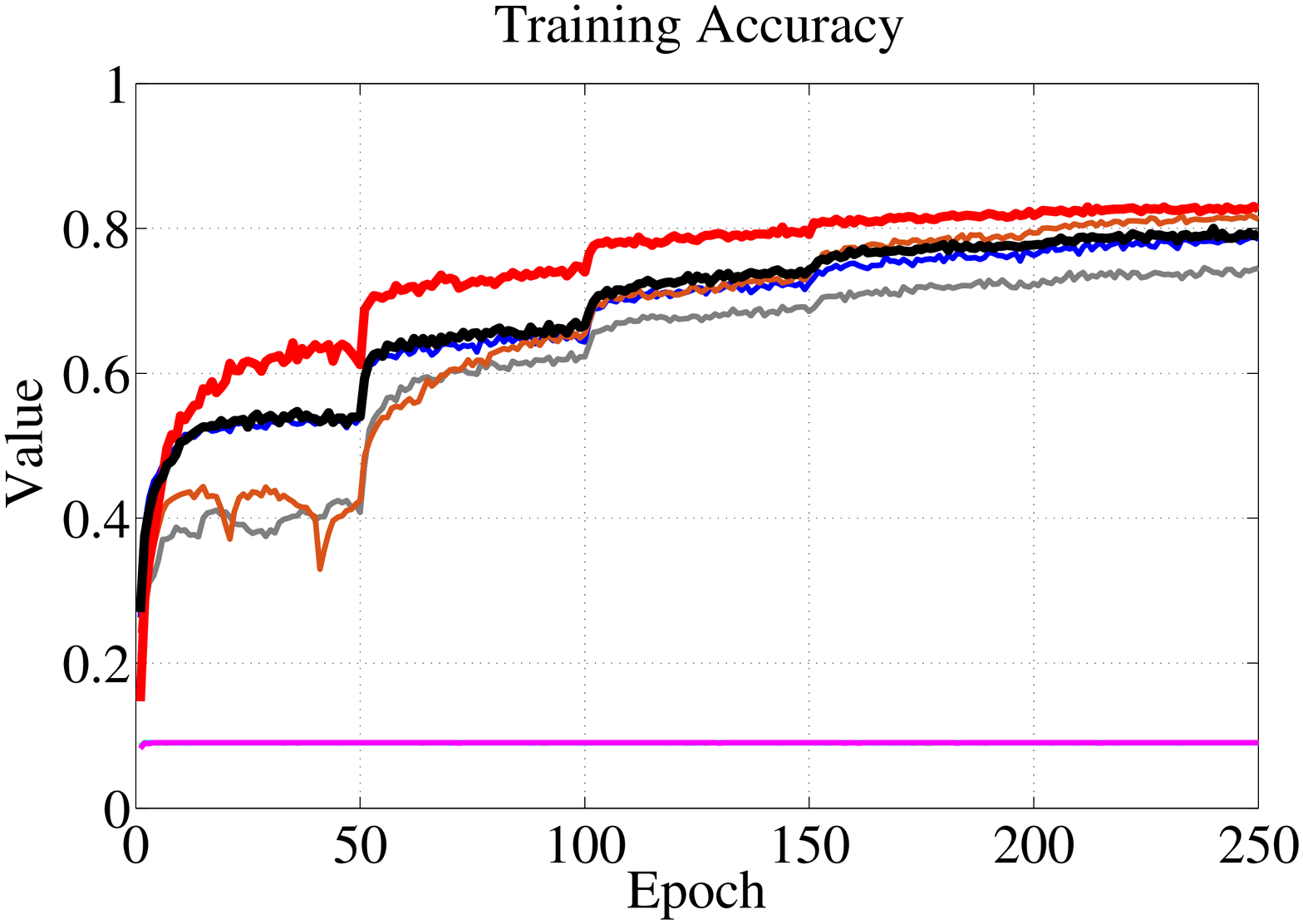}}\hfill
    \subfigure{\includegraphics[width=0.25\linewidth]{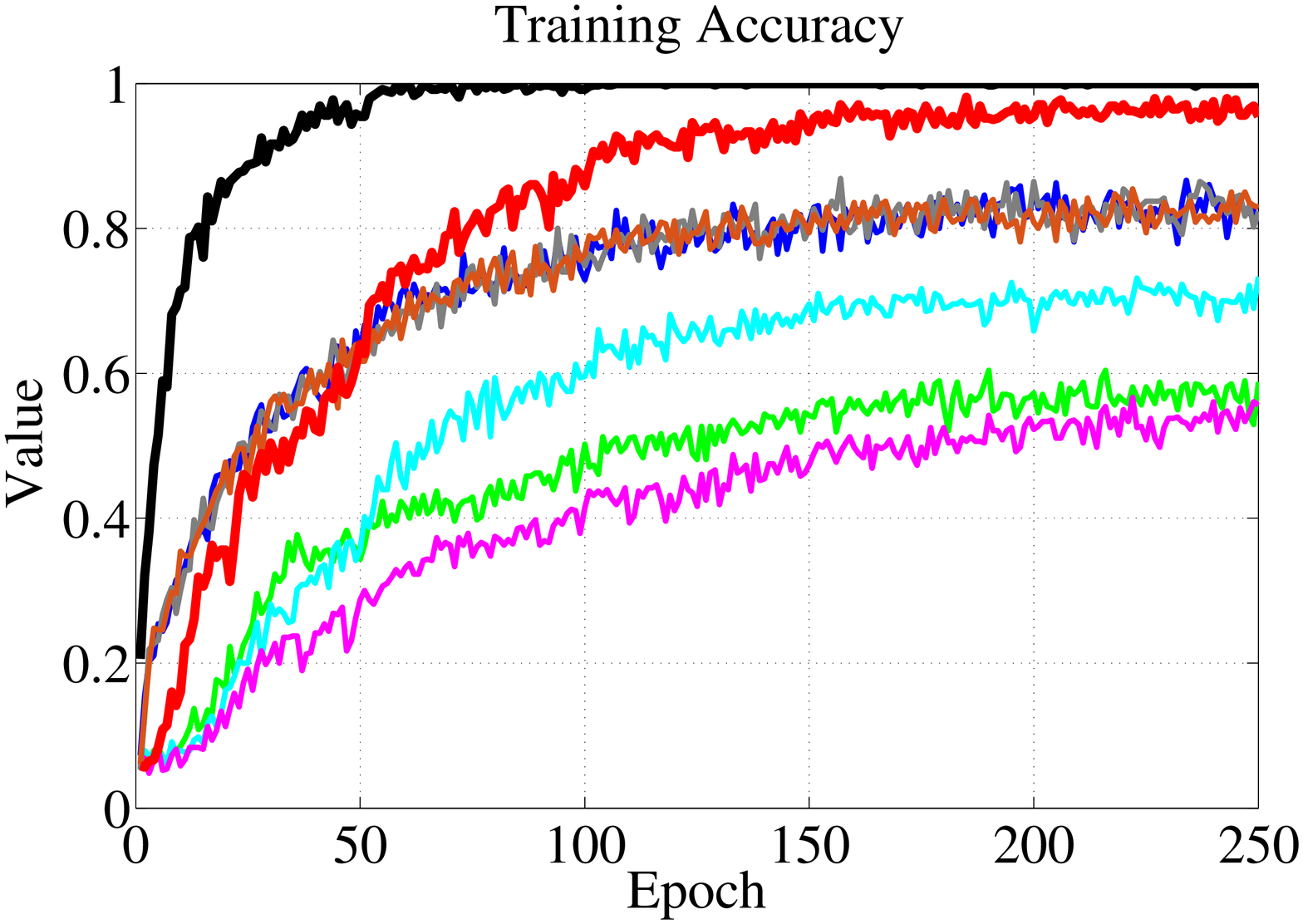}}\hfill
    \subfigure{\includegraphics[width=0.25\linewidth]{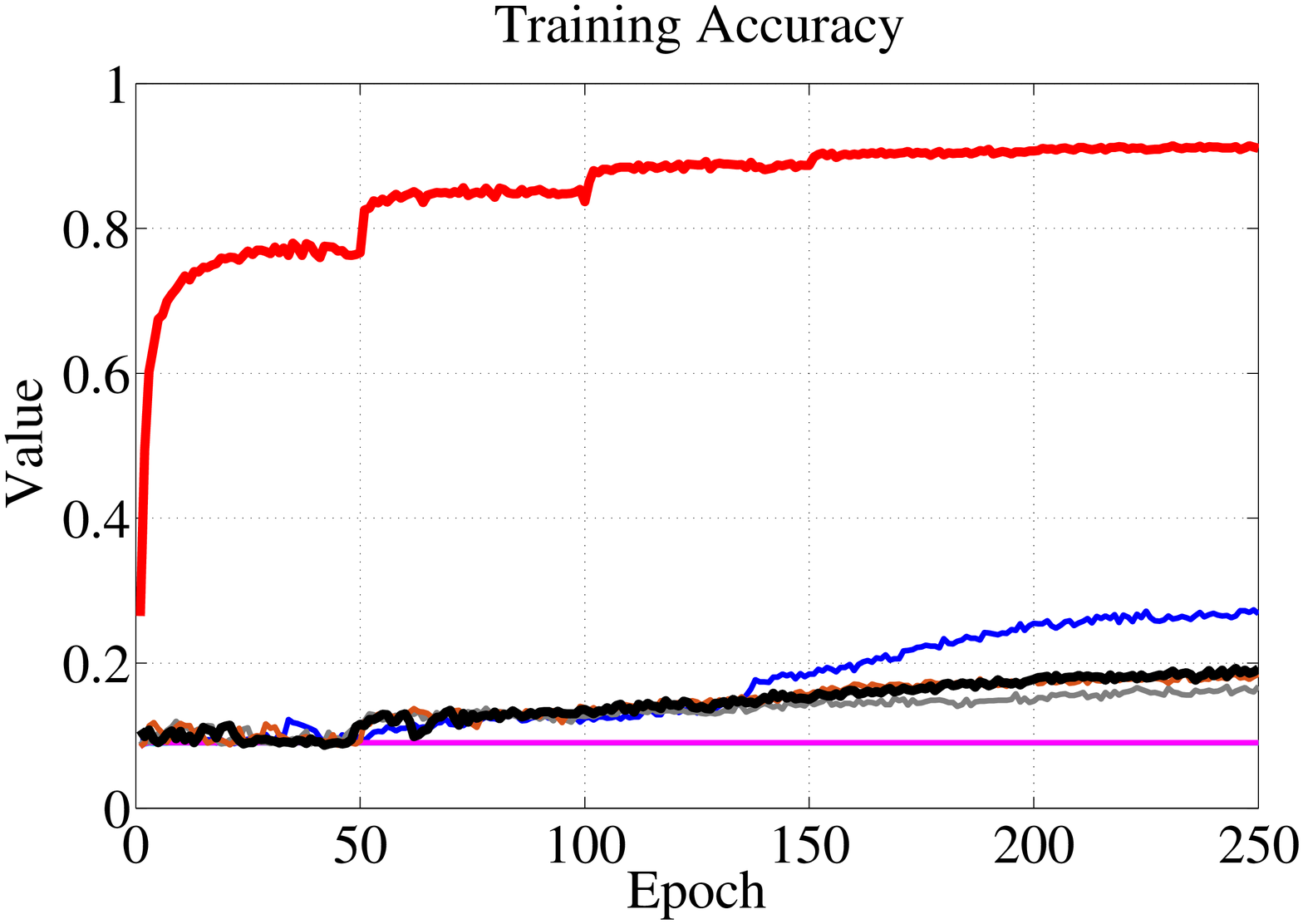}}\\[-1.5ex]
    \subfigure{\includegraphics[width=0.25\linewidth]{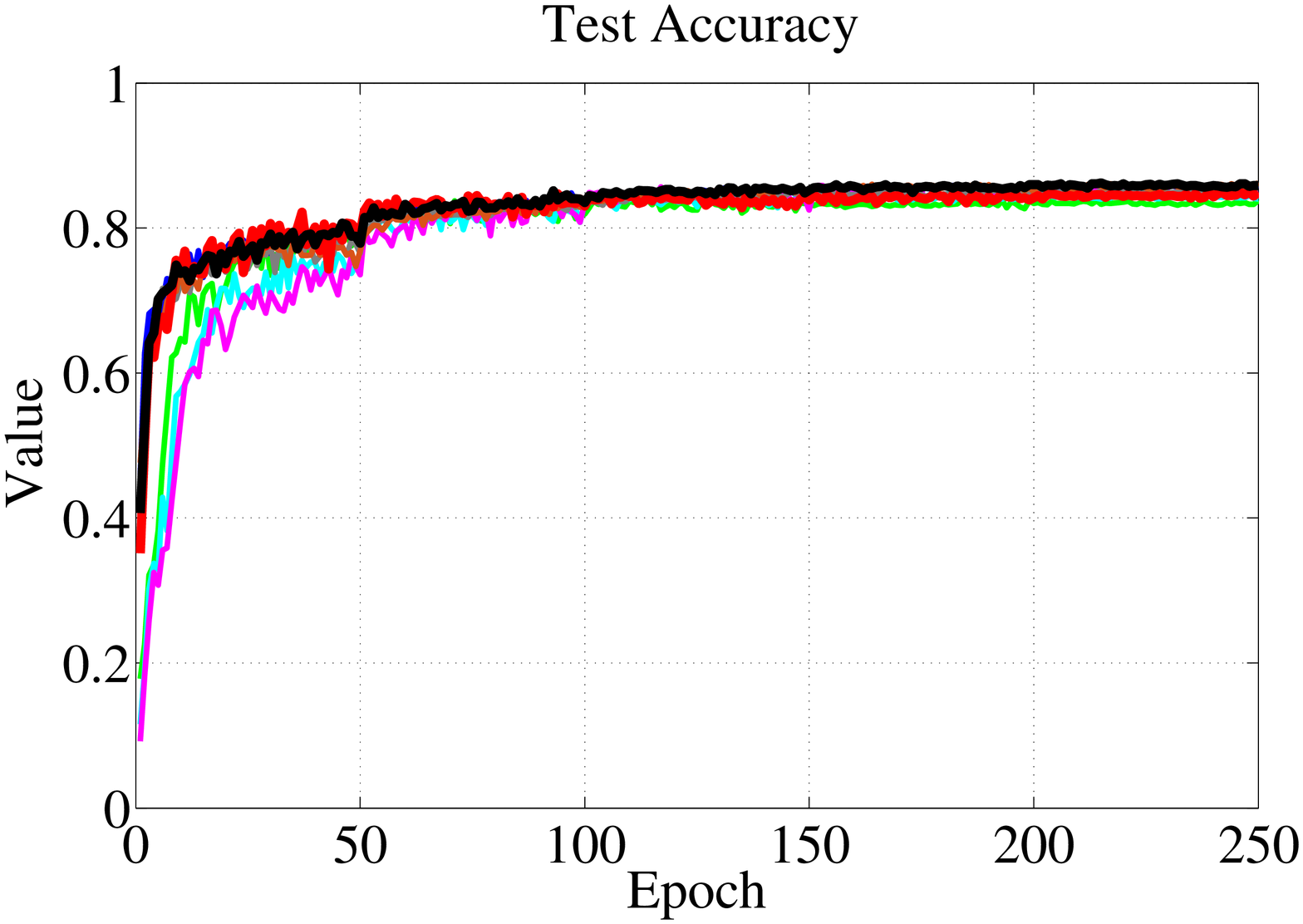}}\hfill
    \subfigure{\includegraphics[width=0.25\linewidth]{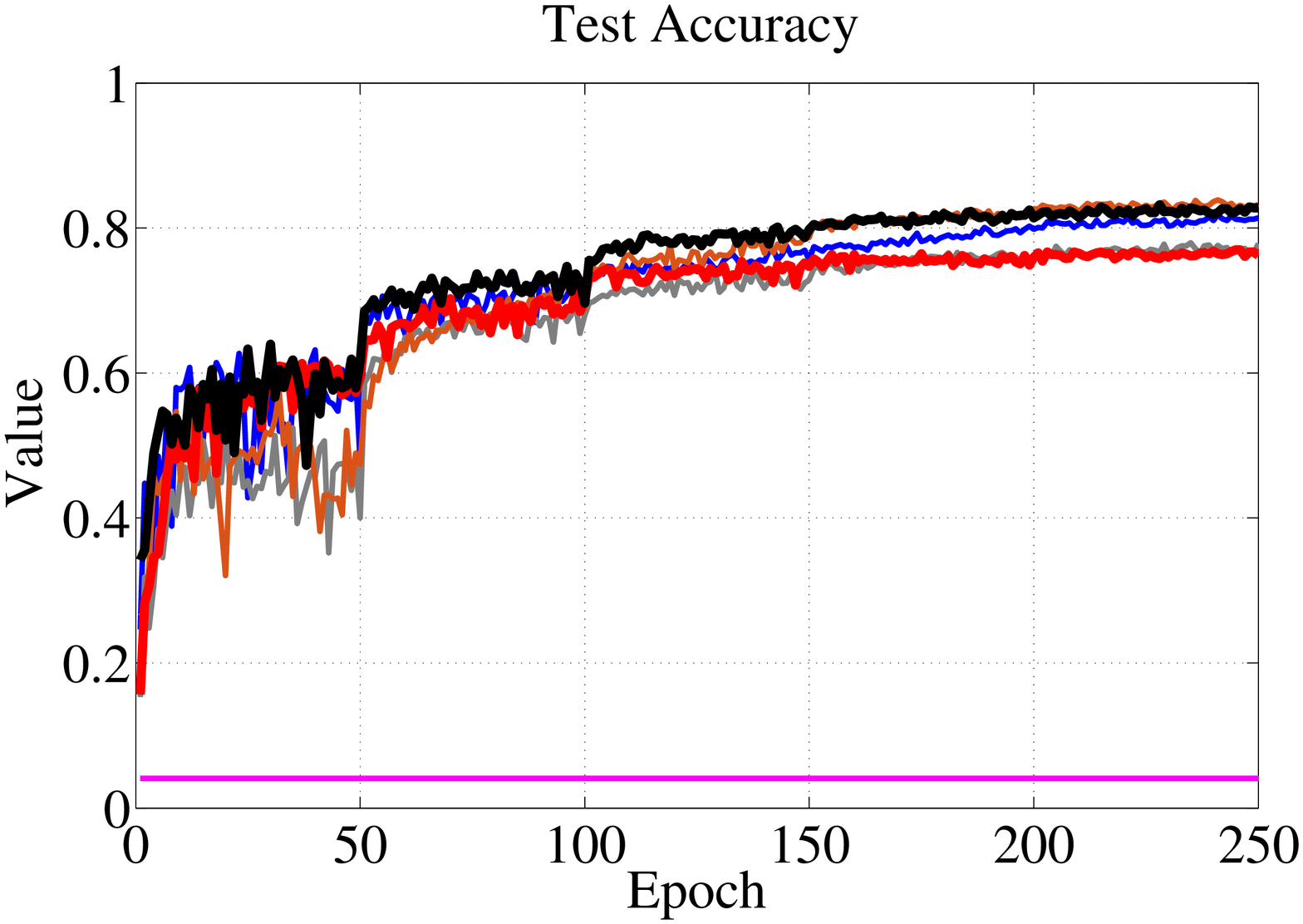}}\hfill
    \subfigure{\includegraphics[width=0.25\linewidth]{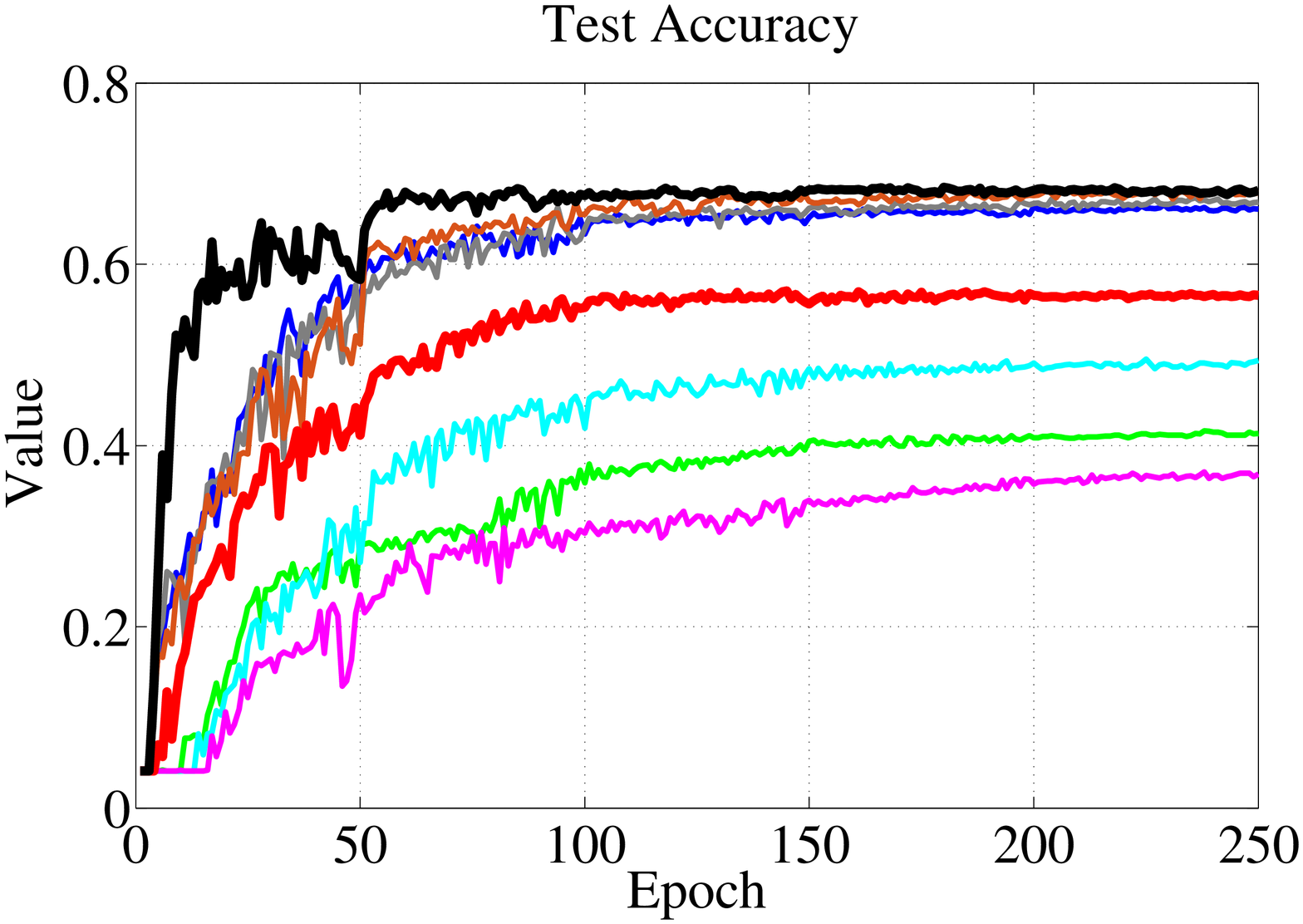}}\hfill
    \subfigure{\includegraphics[width=0.25\linewidth]{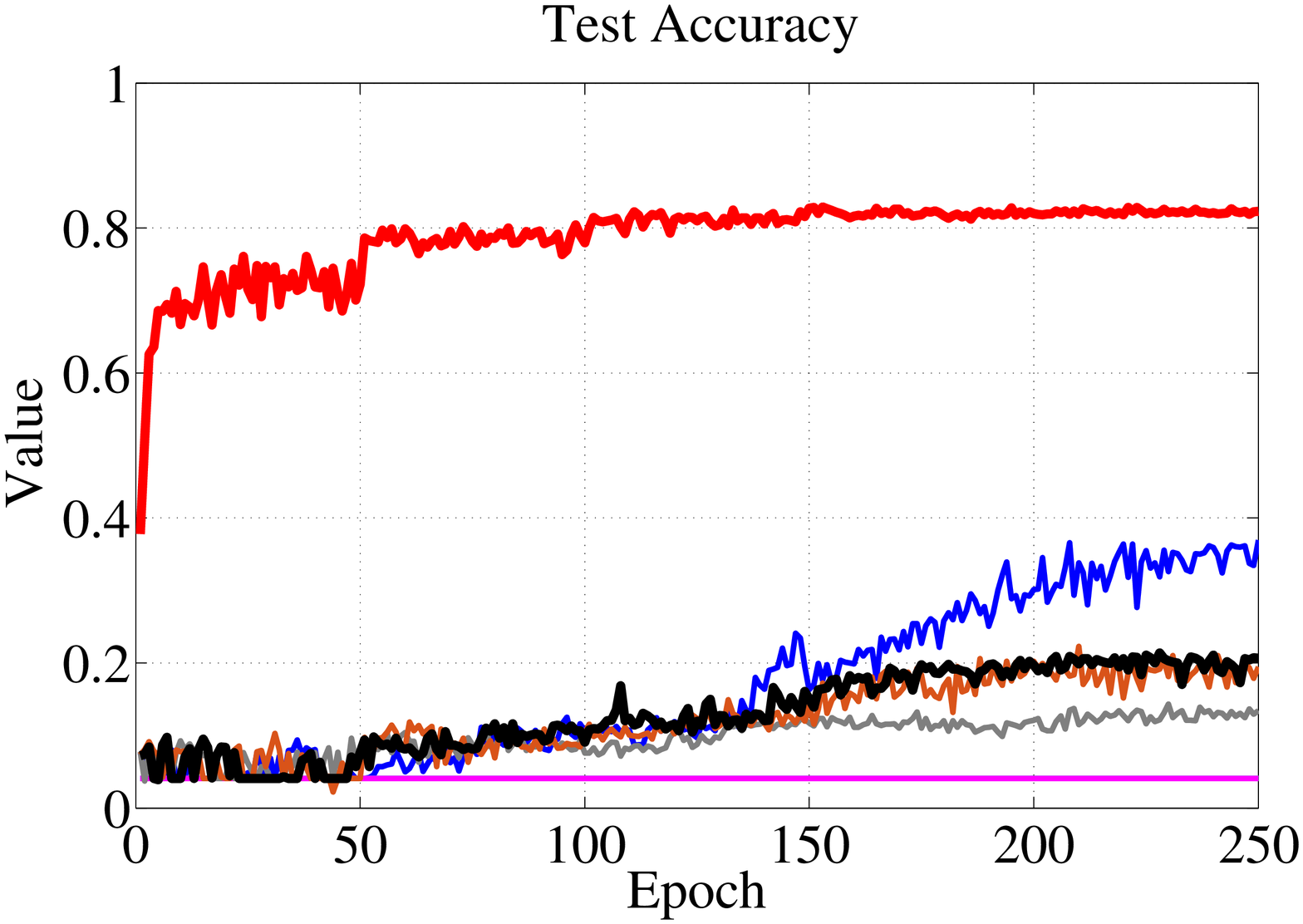}}\\[-1.5ex]
    \subfigure{\includegraphics[width=0.25\linewidth]{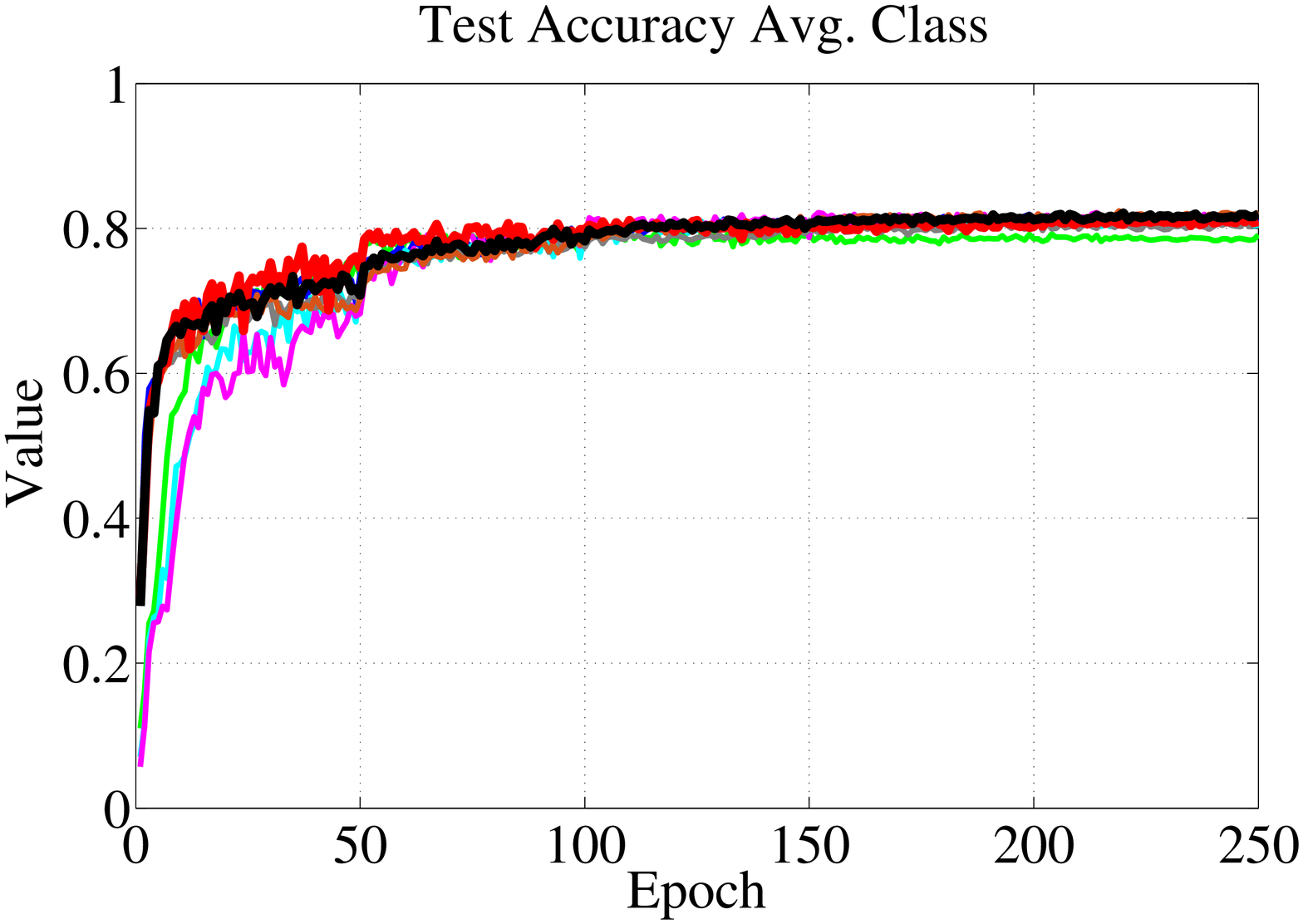}}\hfill
    \subfigure{\includegraphics[width=0.25\linewidth]{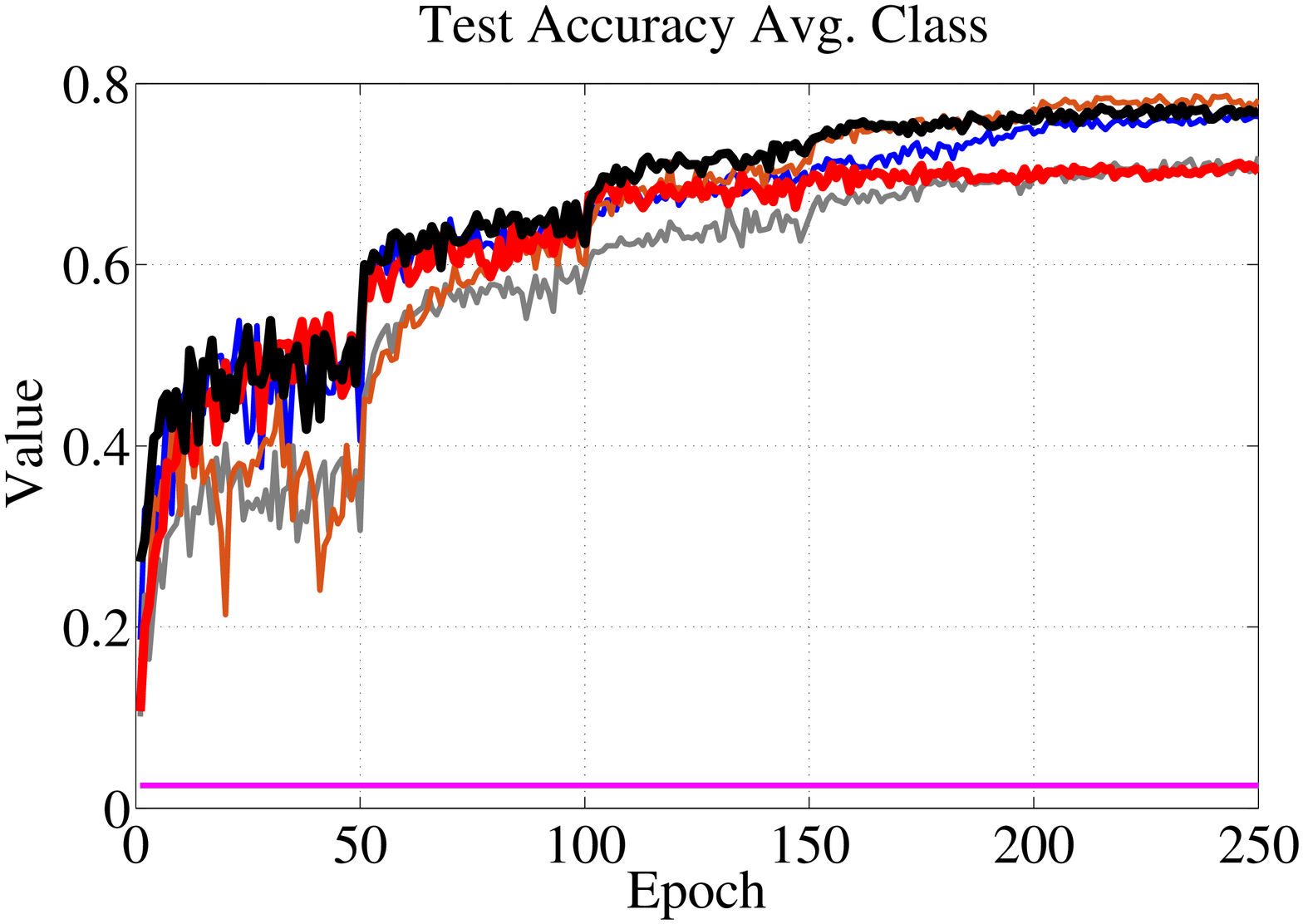}}\hfill
    \subfigure{\includegraphics[width=0.25\linewidth]{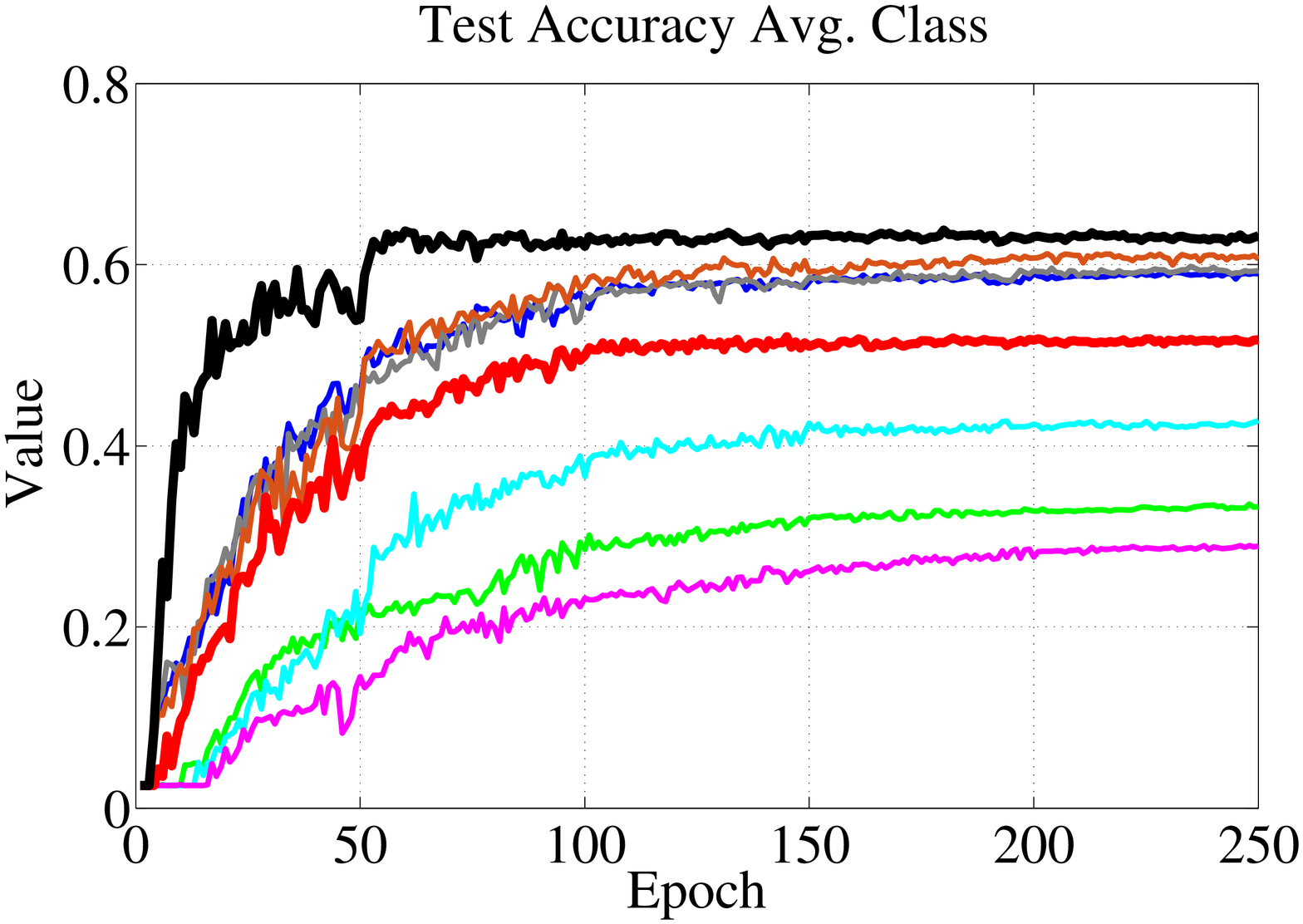}}\hfill
    \subfigure{\includegraphics[width=0.25\linewidth]{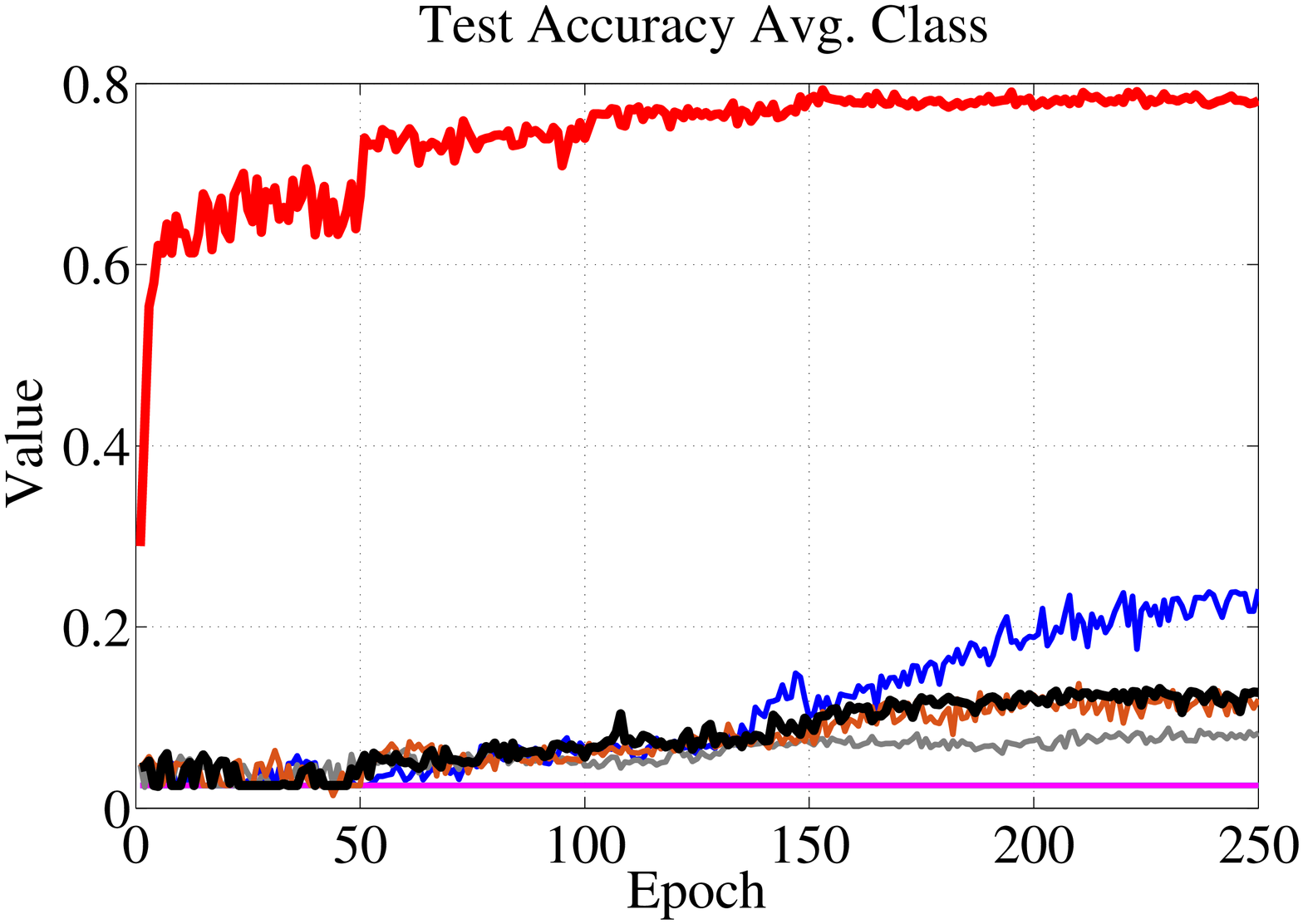}}
    % \vspace{-6mm}
    \caption{Result comparison on ModelNet40: {\bf (left->right)} default setting, $lr=0.01$, $ts=440$, and $bs=2$.}
	\label{fig:4_cases_modelnet40}
% 	\vspace{-3mm}
\end{figure*}

\begin{figure*}[t]
    \subfigure{\includegraphics[width=0.25\linewidth]{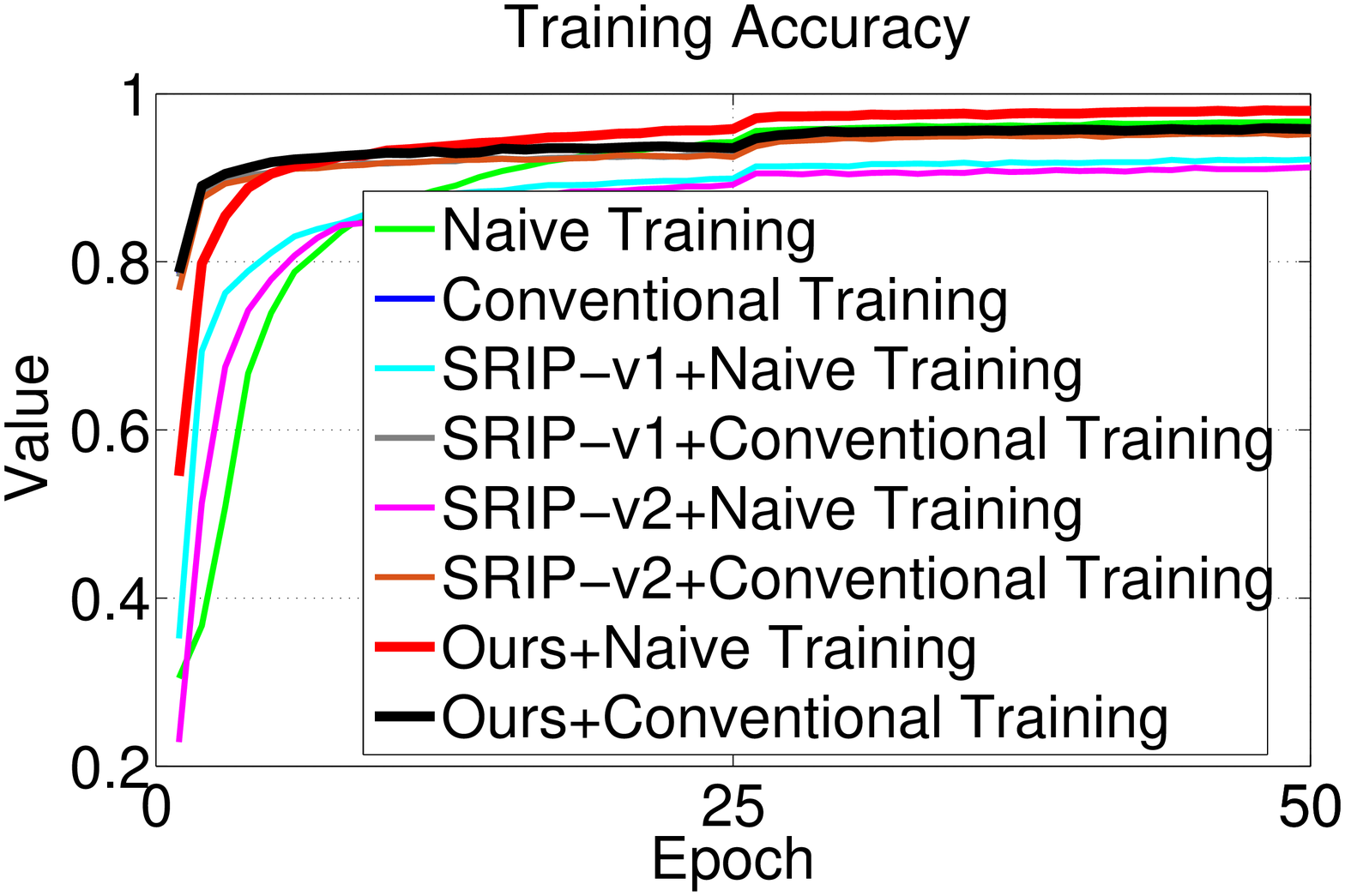}}\hfill
    \subfigure{\includegraphics[width=0.25\linewidth]{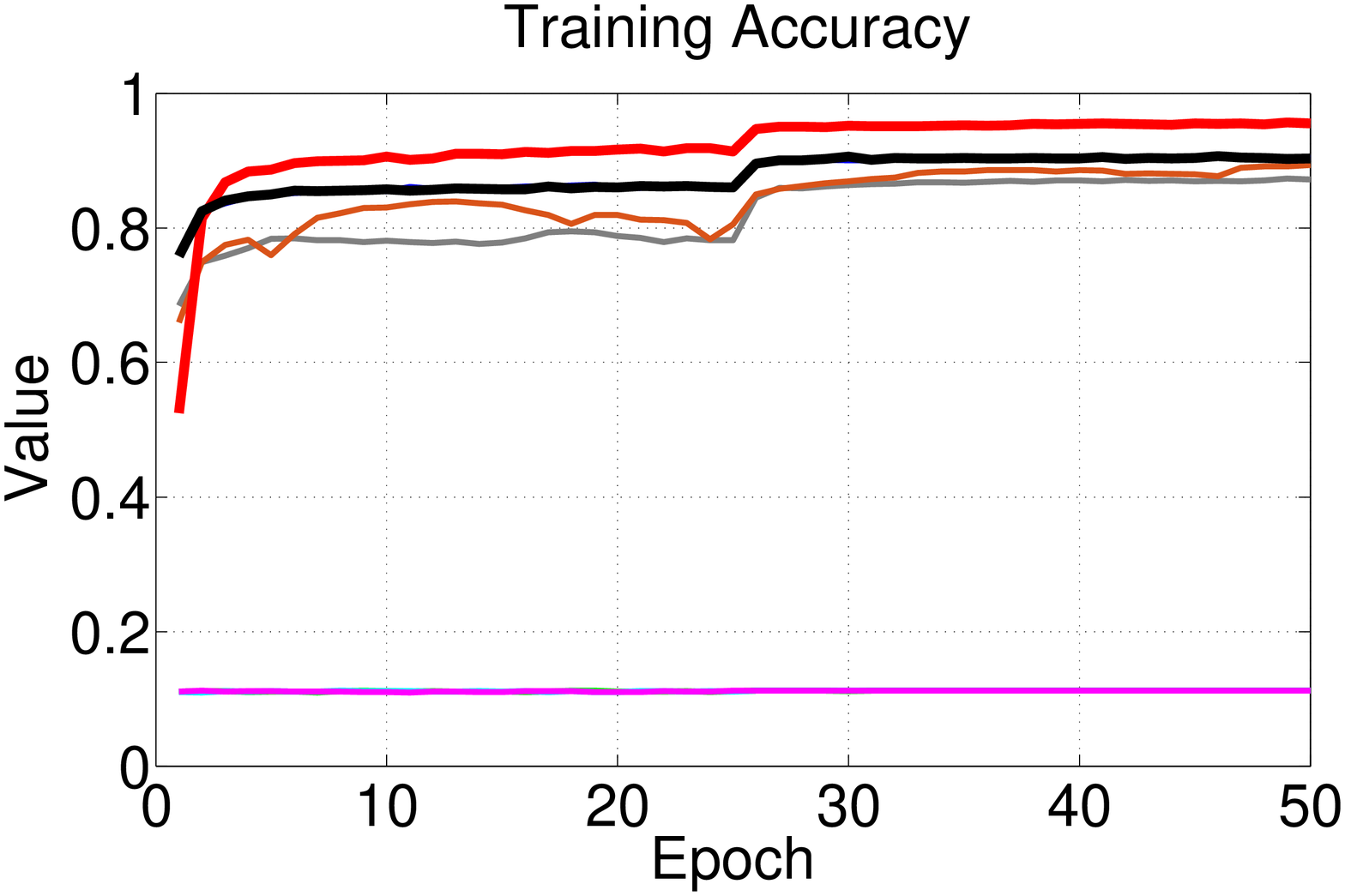}}\hfill
    \subfigure{\includegraphics[width=0.25\linewidth]{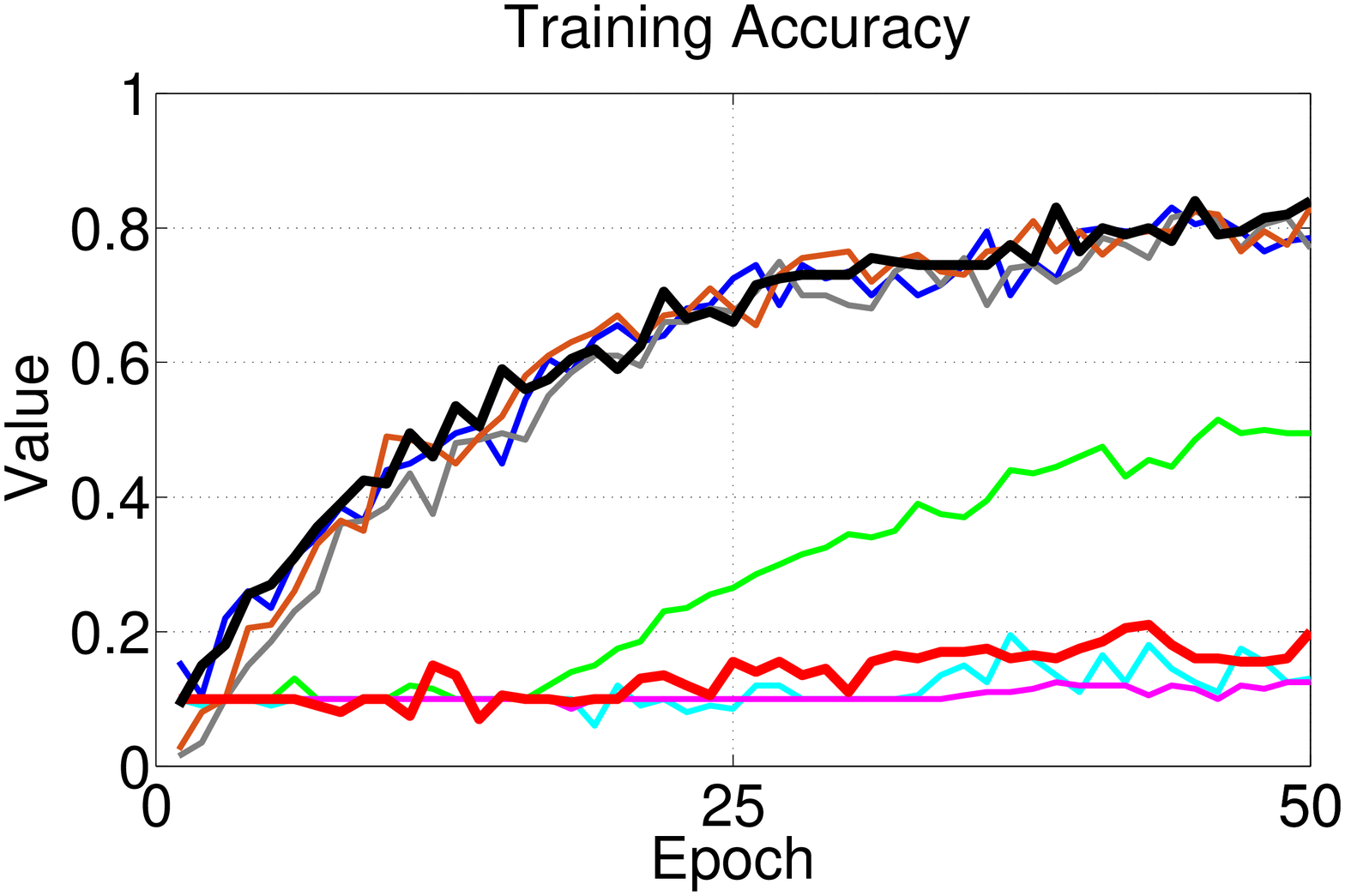}}\hfill
    \subfigure{\includegraphics[width=0.25\linewidth]{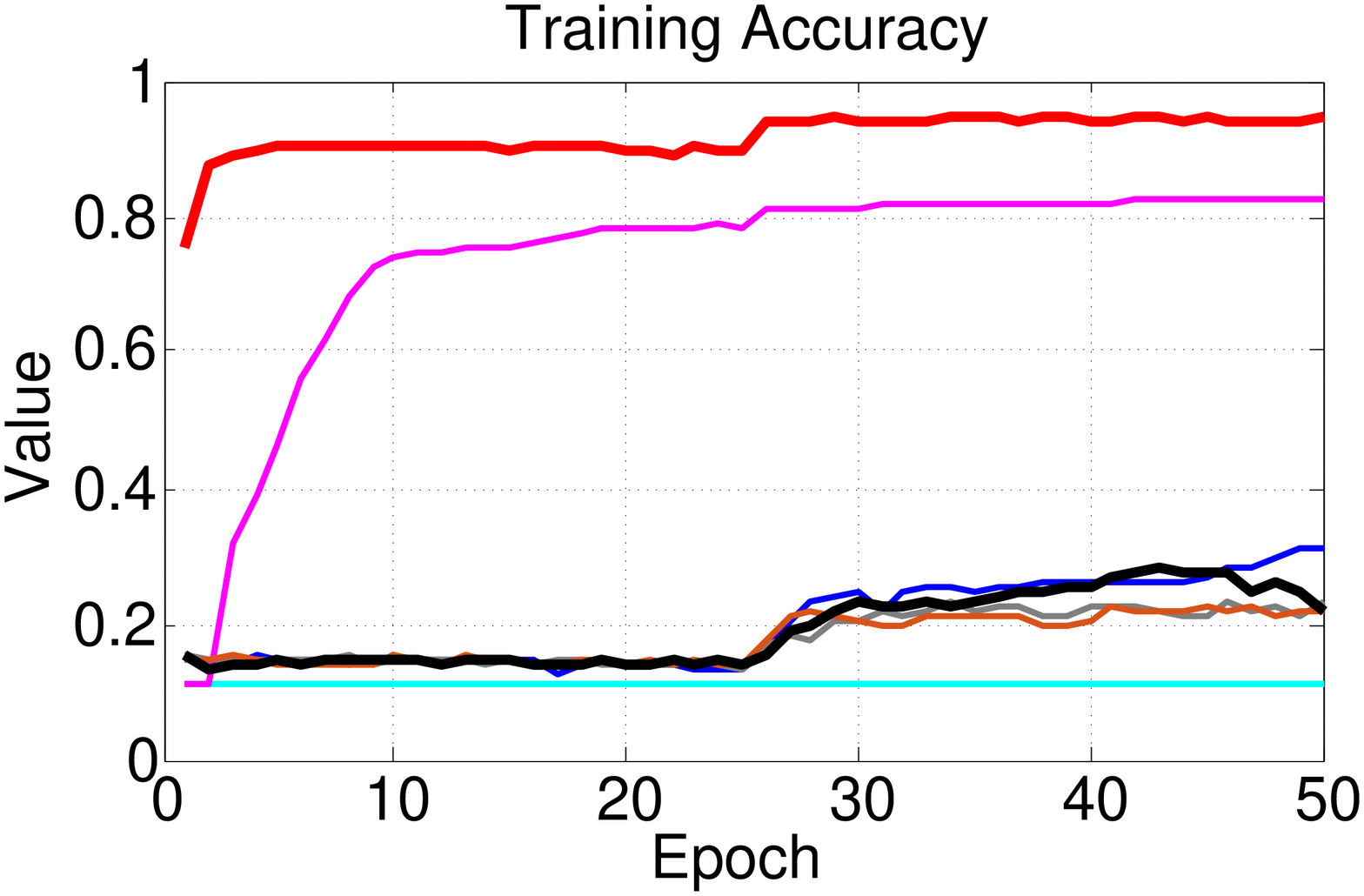}}\hfill
    \subfigure{\includegraphics[width=0.25\linewidth]{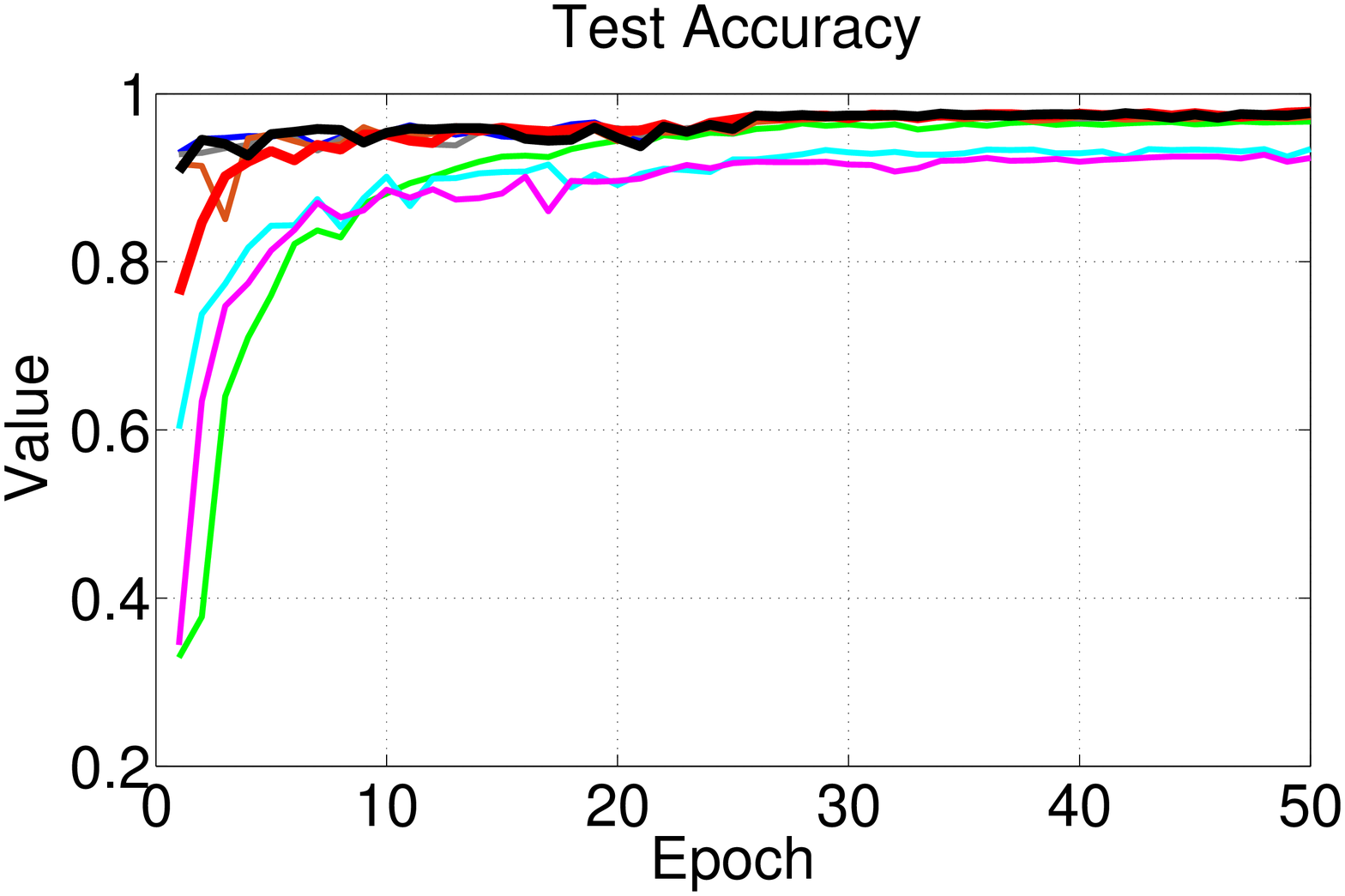}}\hfill
    \subfigure{\includegraphics[width=0.25\linewidth]{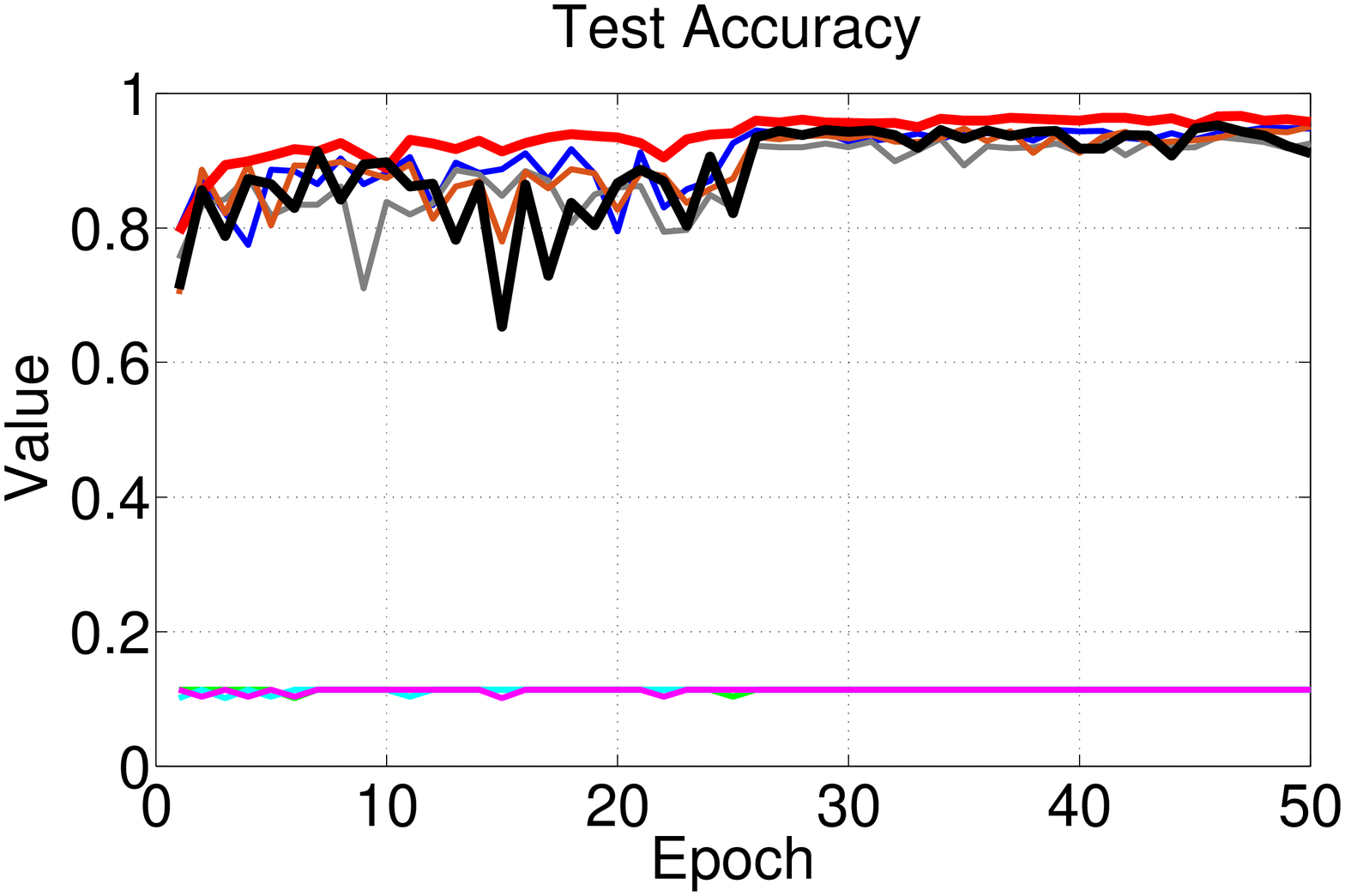}}\hfill
    \subfigure{\includegraphics[width=0.25\linewidth]{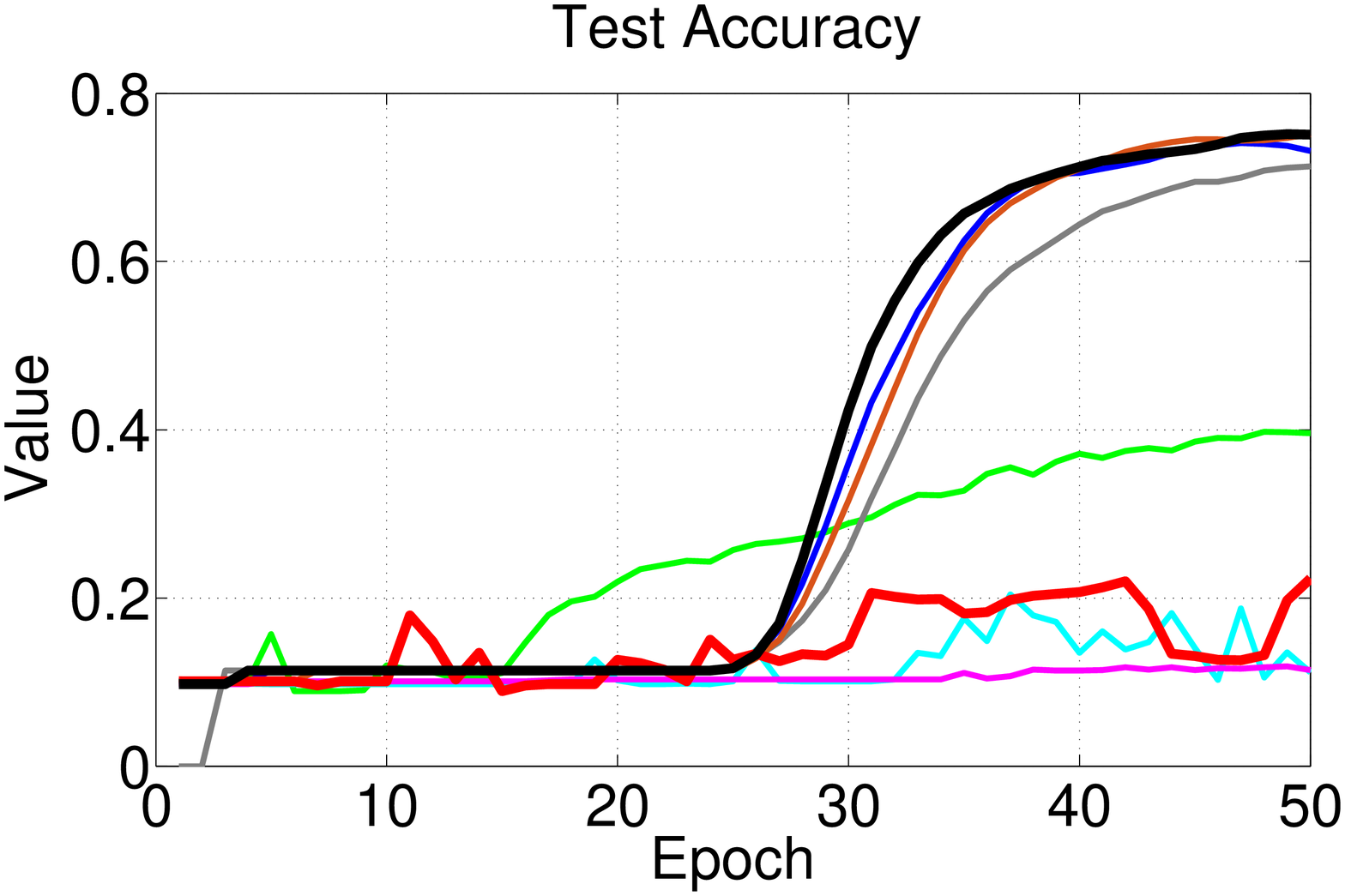}}\hfill
    \subfigure{\includegraphics[width=0.25\linewidth]{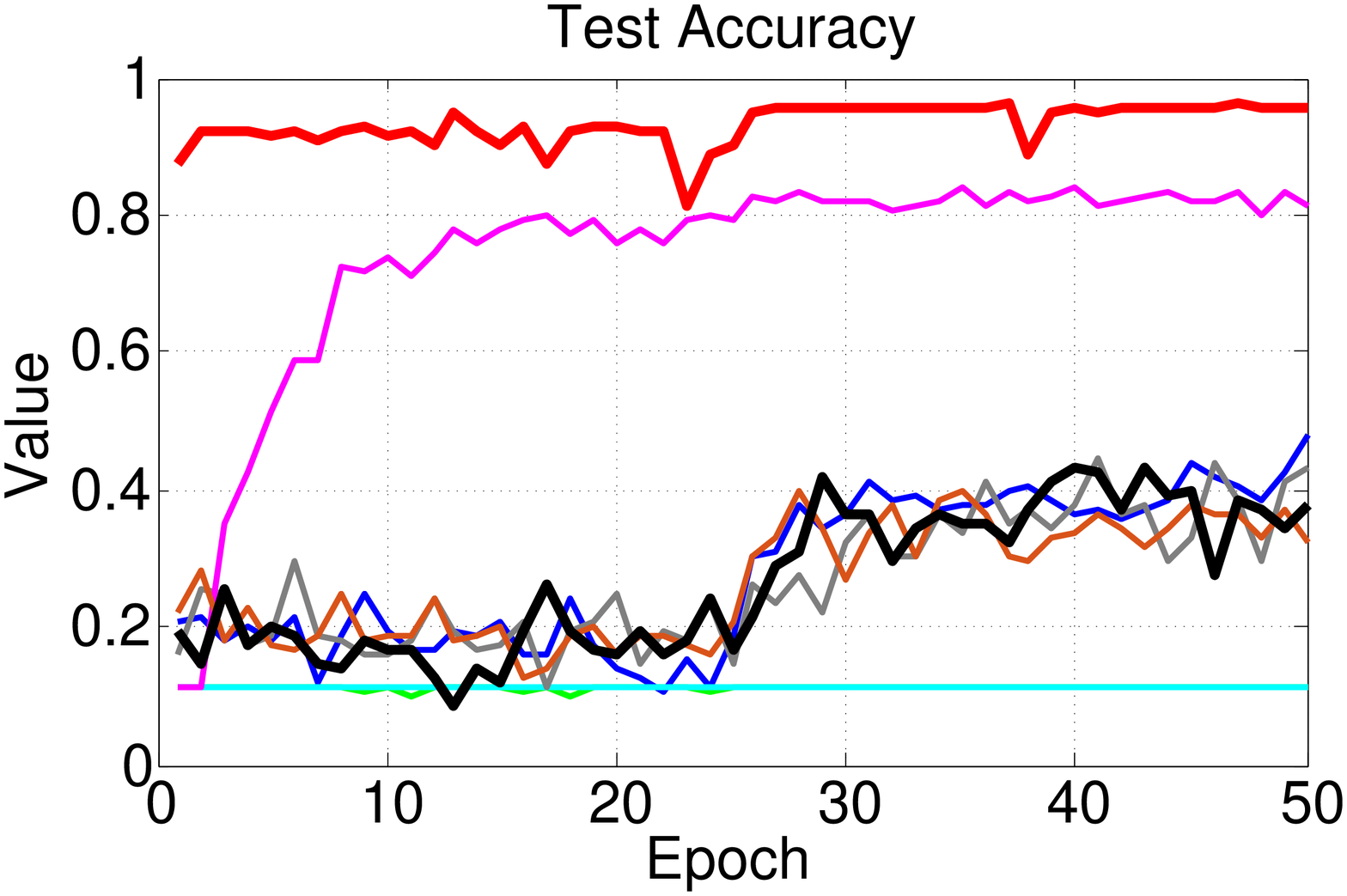}}\hfill
    \subfigure{\includegraphics[width=0.25\linewidth]{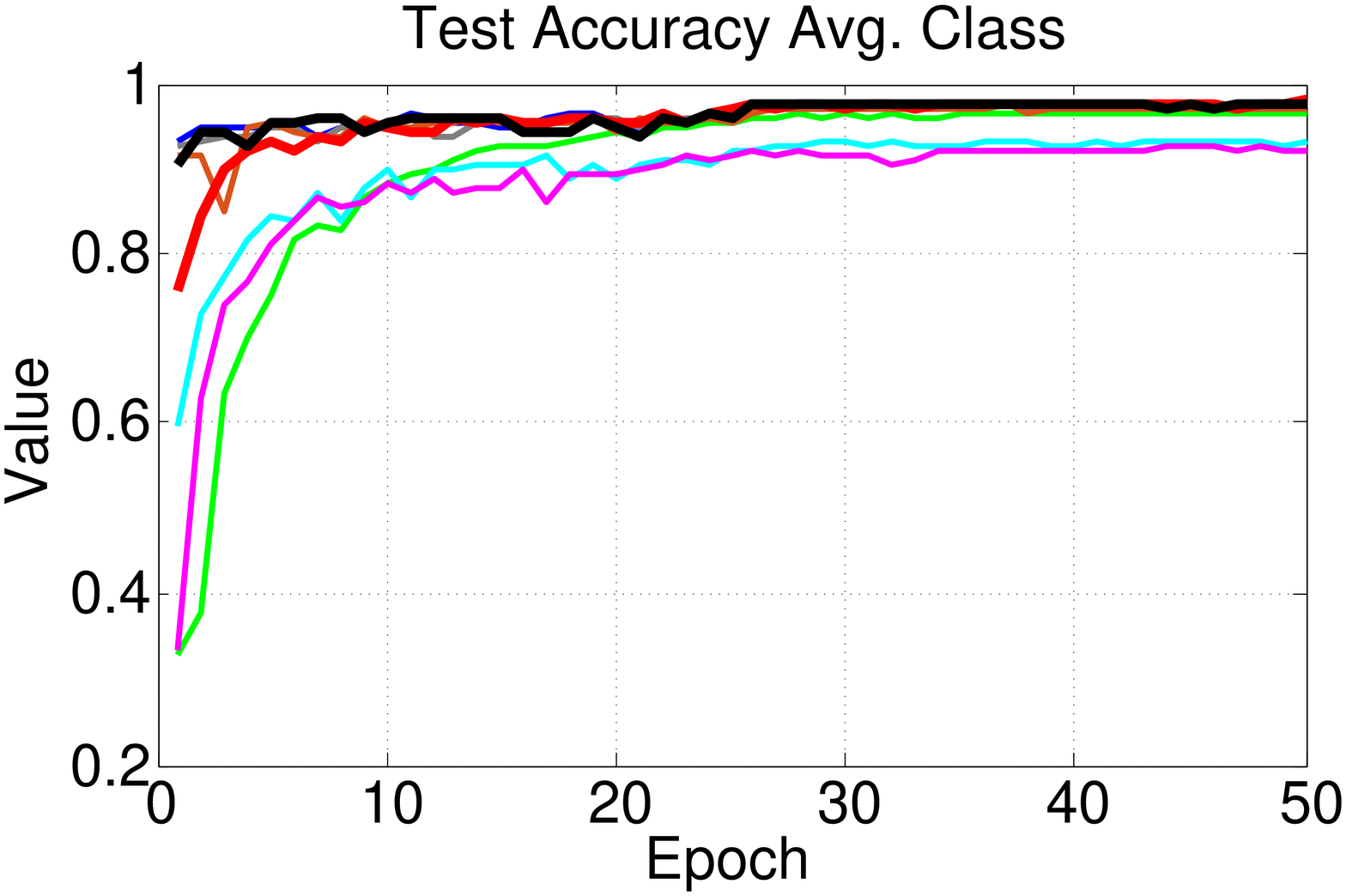}}\hfill
    \subfigure{\includegraphics[width=0.25\linewidth]{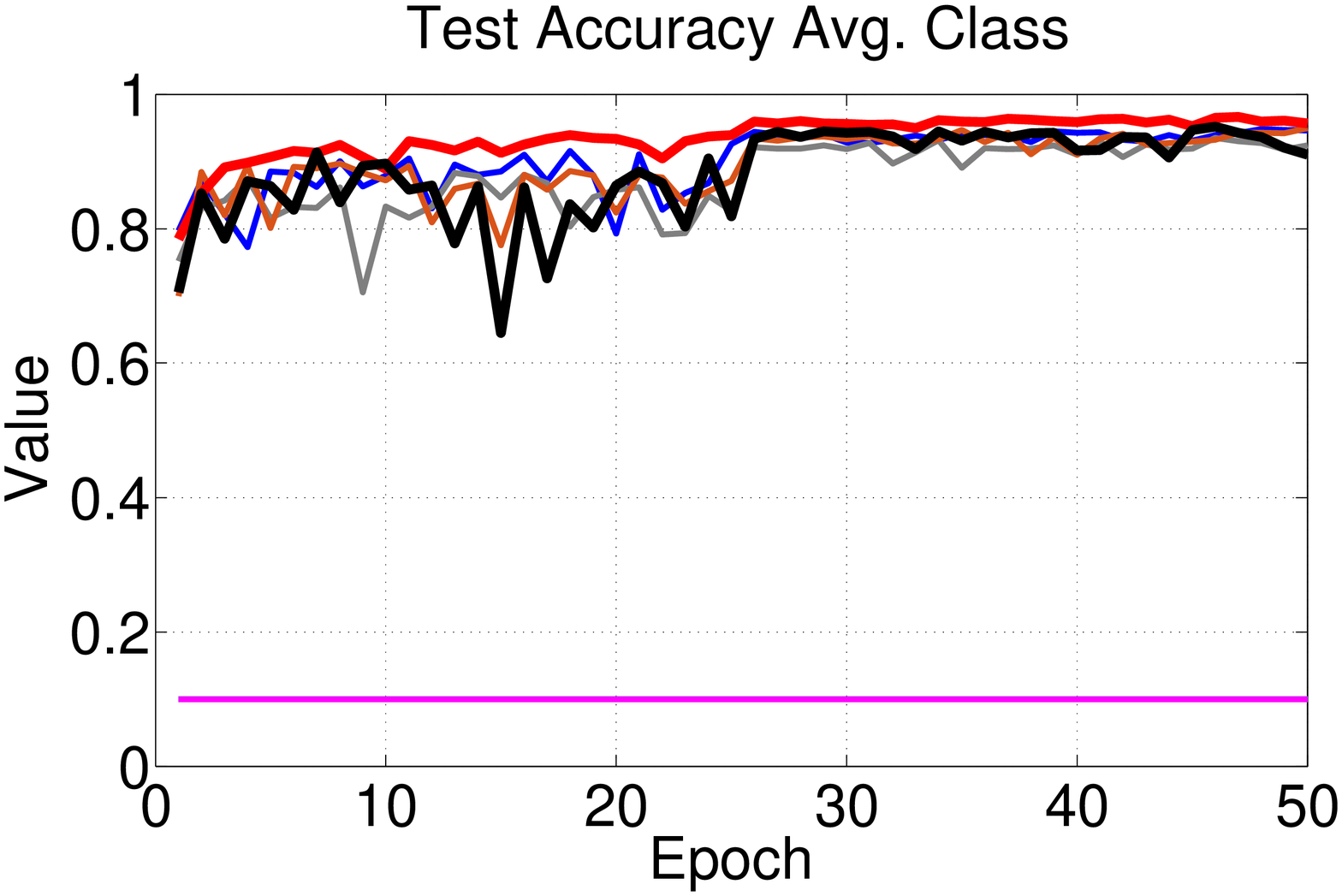}}\hfill
    \subfigure{\includegraphics[width=0.25\linewidth]{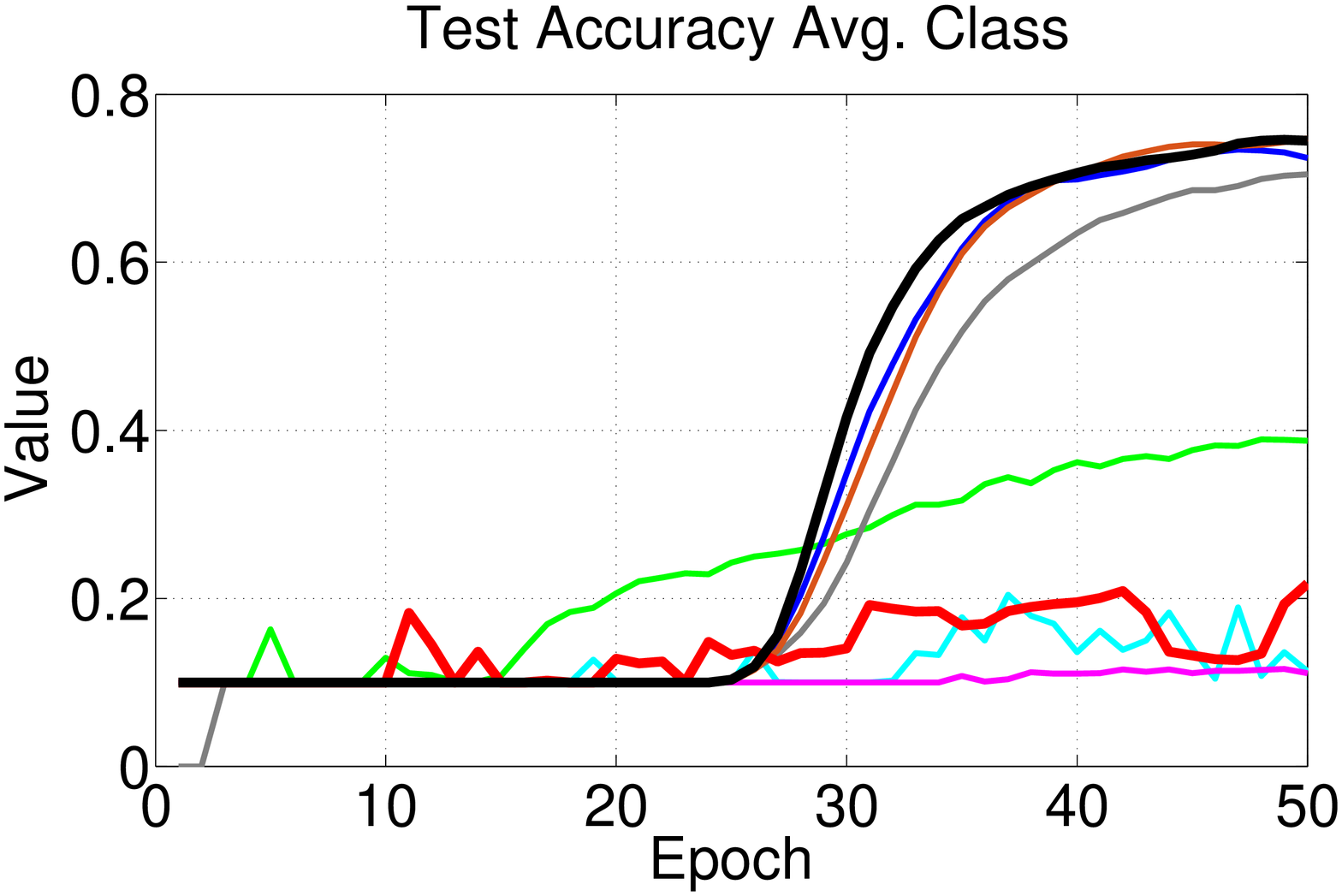}}\hfill
    \subfigure{\includegraphics[width=0.25\linewidth]{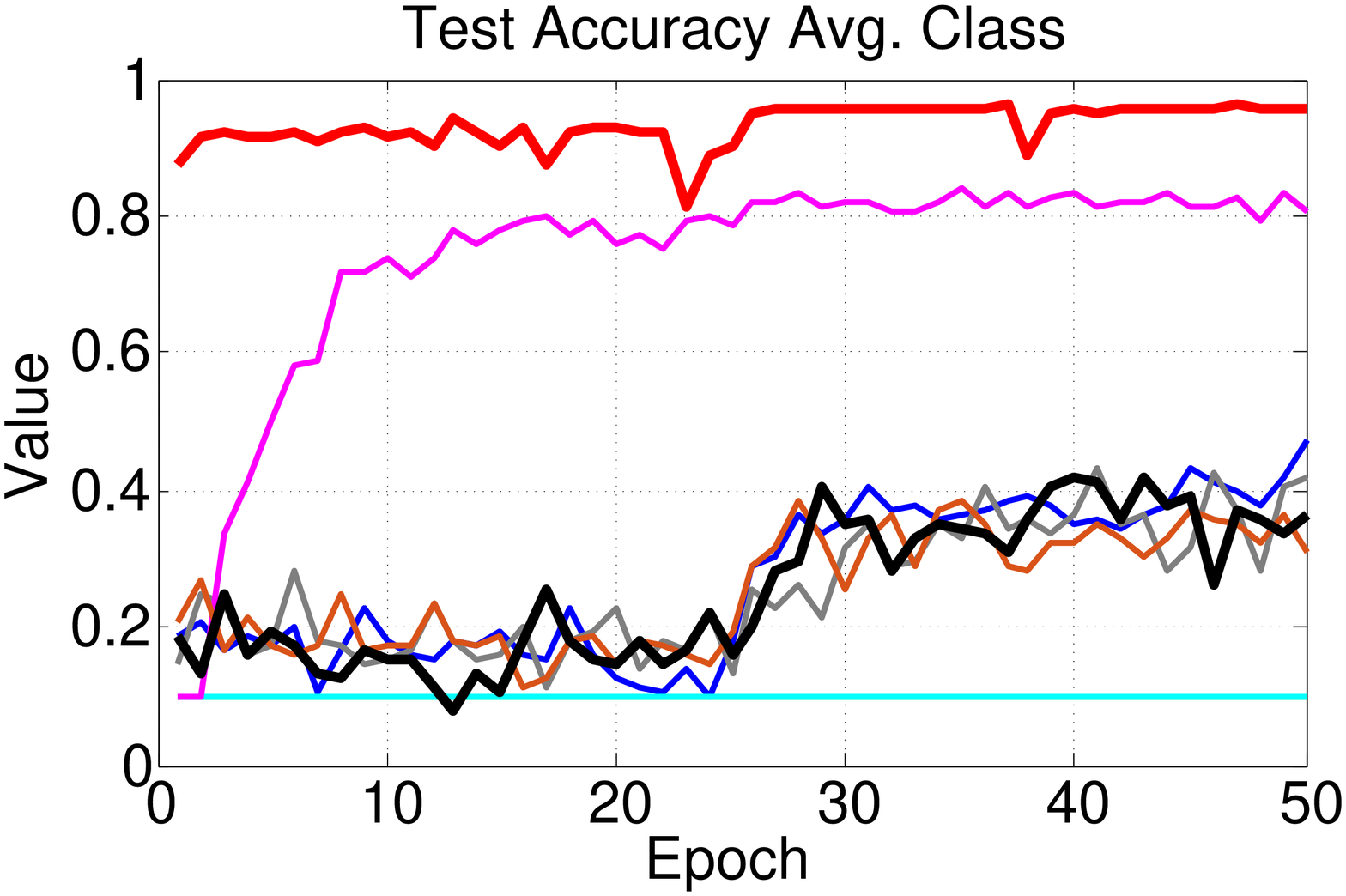}}\hfill
    % \caption{\footnotesize Result comparison using PointNet (with mlp(64, 64, 64, 128, 512, 256, 128, 10)) on 2D point clouds of MNIST~\cite{lecun1998gradient}: {\bf (left->right)} default setting, $lr=0.01$, $ts=200$, and $bs=2$.}
    \caption{Result comparison on MNIST: {\bf (left->right)} default setting, $lr=0.01$, $ts=200$, and $bs=2$.}
	\label{fig:4_cases_mnist}
% 	\vspace{-1mm}
\end{figure*}

\begin{table*}[t]
	\centering\footnotesize
	\setlength\tabcolsep{2pt}
	% \begin{tabular}{|c|p{14mm}|p{14mm}|p{14mm}|p{14mm}|p{14mm}|p{14mm}|p{14mm}|p{14mm}|c|}
	\begin{tabular}{|c||c|c||c|c||c|c||c|c||c|c|}
		\hline
		\multirow{2}{*}{*} & \multicolumn{2}{c||}{default setting}  & \multicolumn{2}{c||}{$lr=0.01$} & \multicolumn{2}{c||}{$ts=440$} & \multicolumn{2}{c||}{$bs=2$} & \multicolumn{2}{c|}{ave. of 4 settings} \\ \hline
		{} & avg. cls & overall & avg. cls & overall& avg. cls & overall & avg. cls & overall & avg. cls & overall \\ \hline
		NT           & 0.800 & 0.840  & 0.025  & 0.041   & 0.336  & 0.416 & 0.025 & 0.041 & 0.297 & 0.335 \\ \hline   
		SRIP-v1+NT  & 0.815 & 0.849  & 0.025  & 0.041   & 0.428  & 0.495 & 0.025 & 0.041 & 0.340 & 0.357  \\ \hline
		SRIP-v2+NT & {\bf 0.822} & {\bf 0.858}  & 0.025  & 0.041   & 0.290  & 0.371 & 0.025 & 0.041 & 0.309 & 0.328 \\ \hline 
		{\bf Ours+NT}         & 0.814 & 0.850  & {\bf 0.712}  & {\bf 0.770}   & {\bf 0.520}  & {\bf 0.571} & {\bf 0.793} & {\bf 0.829} & {\bf 0.732} & {\bf 0.755} \\ \hline \hline
		CT             & 0.819 & 0.860  & 0.767  & 0.818   & 0.591  & 0.666 & {\bf 0.242} & {\bf 0.369} & {\bf 0.642} & {\bf 0.678} \\ \hline
		SRIP-v1+CT  & 0.812 & 0.856  & 0.719  & 0.780   & 0.598  & 0.673 & 0.088 & 0.143 & 0.554 & 0.613  \\ \hline
		SRIP-v2+CT & {\bf 0.824} & {\bf 0.863}  & {\bf 0.787}  & {\bf 0.839}   & 0.612  & 0.682 & 0.138 & 0.223 & 0.590 & 0.652 \\ \hline 
		{\bf Ours+CT}               & 0.821 & 0.862 & 0.775   & 0.832   & {\bf 0.638}  & {\bf 0.685} & 0.132 & 0.214 & 0.620 & 0.648 \\ \hline
	\end{tabular}
% 	\vspace{2mm}
	\caption{Best test accuracy comparison on ModelNet40.}
	\label{tab:3D_classification_table}
% 	\vspace{-3mm}
\end{table*}

\begin{table*}[t]
 	\centering\footnotesize
 	\setlength\tabcolsep{2pt}
 	% \begin{tabular}{|c|p{14mm}|p{14mm}|p{14mm}|p{14mm}|p{14mm}|p{14mm}|p{14mm}|p{14mm}|c|}
 	\begin{tabular}{|c||c|c||c|c||c|c||c|c||c|c|}
 		\hline
 		\multirow{2}{*}{}& \multicolumn{2}{c||}{default setting}  & \multicolumn{2}{c||}{$lr=0.01$} & \multicolumn{2}{c||}{$ts=200$} & \multicolumn{2}{c||}{$bs=2$} & \multicolumn{2}{c|}{ave. of 4 settings} \\ \hline
 		{} & avg. cls & overall & avg. cls & overall& avg. cls & overall & avg. cls & overall & avg. cls & overall \\ \hline
 		NT           & 0.977 & 0.977  & 0.967  & 0.967   & \textbf{0.398}  & \textbf{0.389} & 0.114 & 0.100 & 0.614 & 0.608 \\ \hline   
 		SRIP-v1+NT  & 0.934 & 0.934  & 0.934  & 0.933   & 0.204  & 0.204 & 0.114 & 0.100 & 0.547 & 0.543  \\ \hline
 		SRIP-v2+NT & 0.928 & 0.927  & 0.928  & 0.927   & 0.119  & 0.116 & 0.928 & 0.927 & 0.726 & 0.724 \\ \hline 
 		{\bf Ours+NT}         & \textbf{0.978} & \textbf{0.978}  & \textbf{0.978}  & \textbf{0.978}   & 0.223  & 0.217 & \textbf{0.963} & \textbf{0.963} & {\bf 0.786} & \textbf{0.784} \\ \hline \hline
 		CT  &  0.967 & 0.967  & 0.976  & 0.976   & 0.741  & 0.734 & \textbf{0.481} & \textbf{0.470} & {\bf 0.791} & {\bf 0.787} \\ \hline
 		SRIP-v1+CT  & 0.975 & 0.975  & 0.975  & 0.975   & 0.713  & 0.705 & 0.222 & 0.206 & 0.721 & 0.715  \\ \hline
 		SRIP-v2+CT & 0.974 & 0.973  & 0.974 & 0.973   & 0.751  & \textbf{0.748} & 0.181 & 0.166 & 0.720 & 0.715 \\ \hline 
 		{\bf Ours+CT}               & \textbf{0.976} & \textbf{0.976} & \textbf{0.977}   & \textbf{0.977}   & {\bf 0.752}  & 0.746 & 0.433 & 0.421 & 0.785 & 0.780 \\ \hline
 	\end{tabular}
%  	\vspace{2mm}
 	\caption{Best test accuracy comparison on 2D point clouds of MNIST.}
 	\label{tab:2D_mnist_point_cloud_table}
 	\vspace{-3mm}
 \end{table*}
 
\begin{figure*}[t]
	\begin{minipage}[b]{0.245\linewidth}
		\begin{center}
			\centerline{\includegraphics[clip=true,width=1.1\linewidth]{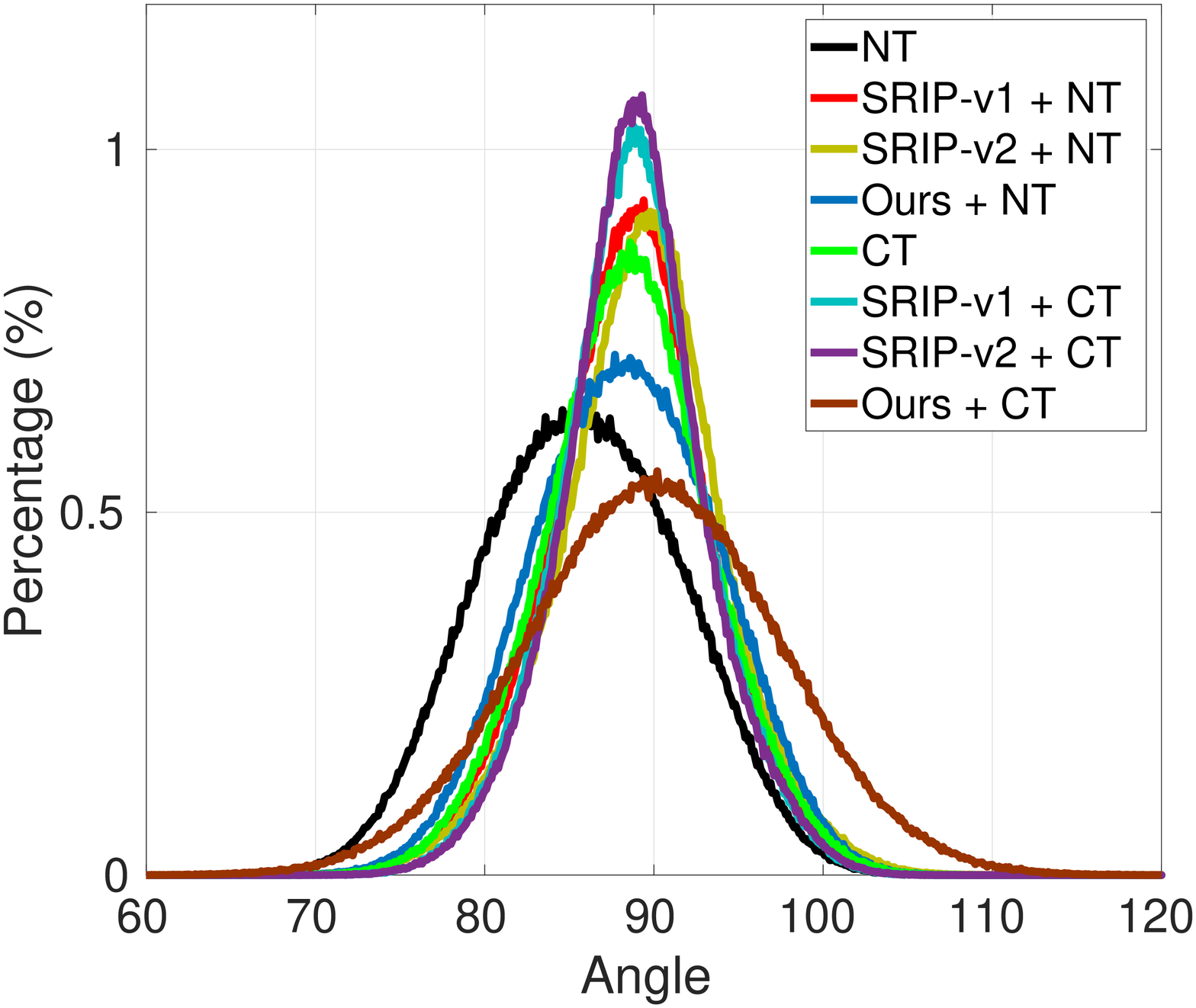}}	
		\end{center}
	\end{minipage}
	\begin{minipage}[b]{0.245\linewidth}
		\begin{center}
			\centerline{\includegraphics[clip=true,width=1.1\linewidth]{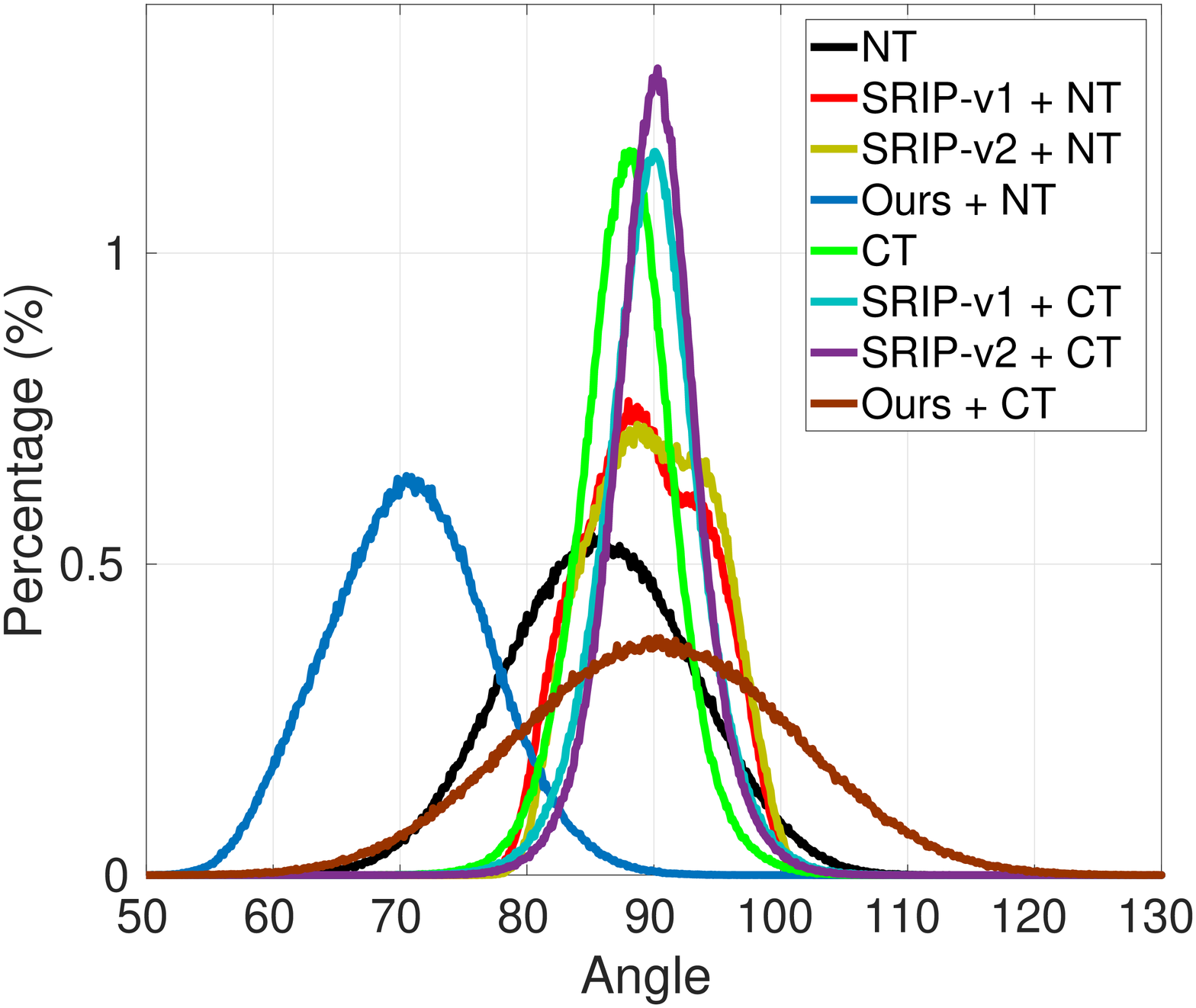}}
		\end{center}
	\end{minipage}	
	\begin{minipage}[b]{0.245\linewidth}
		\begin{center}
			\centerline{\includegraphics[clip=true,width=1.1\linewidth]{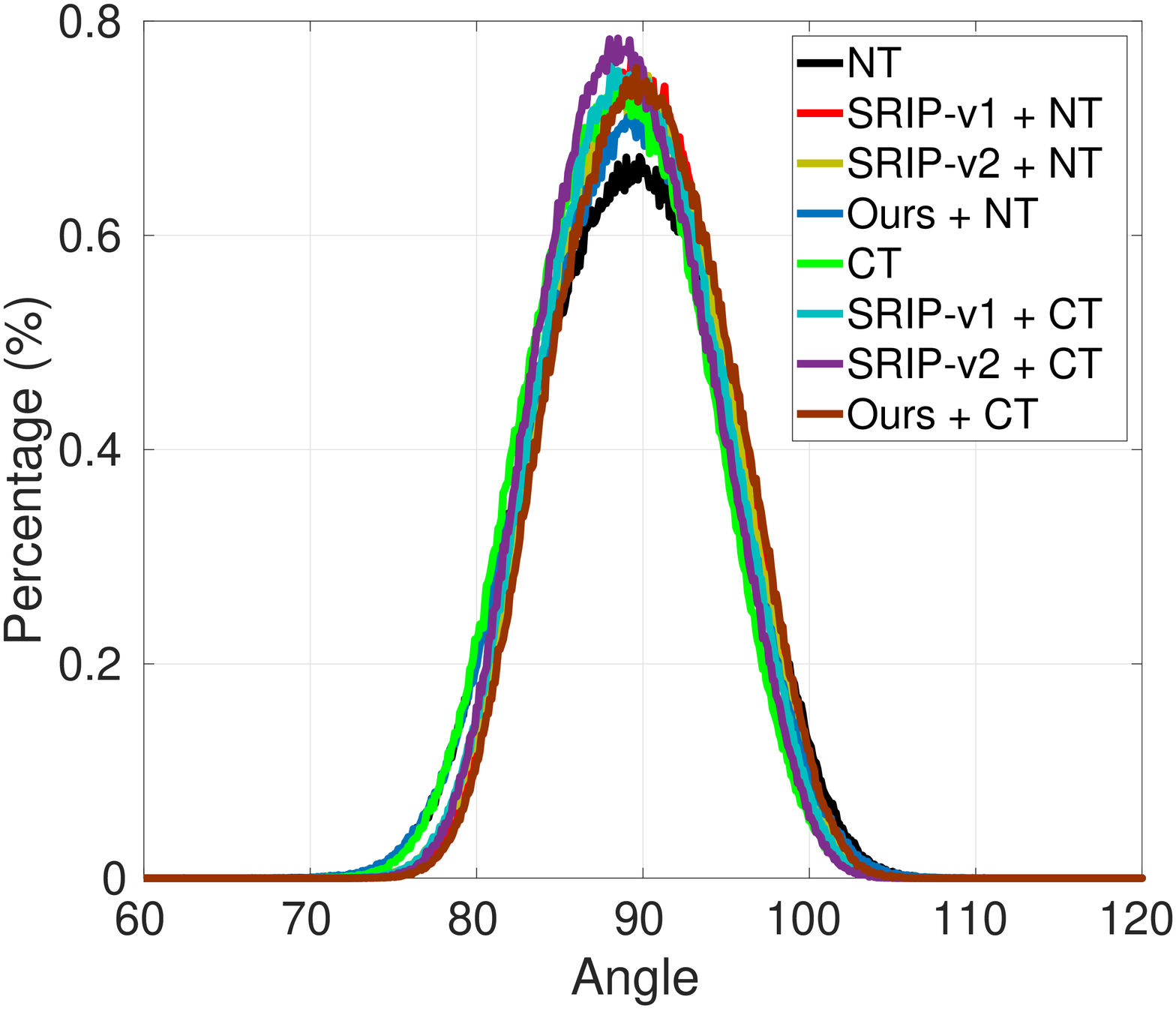}}
% 			\centerline{\footnotesize $CIFAR$-10}		
		\end{center}
	\end{minipage}
	\begin{minipage}[b]{0.245\linewidth}
		\begin{center}
			\centerline{\includegraphics[clip=true,width=1.1\linewidth]{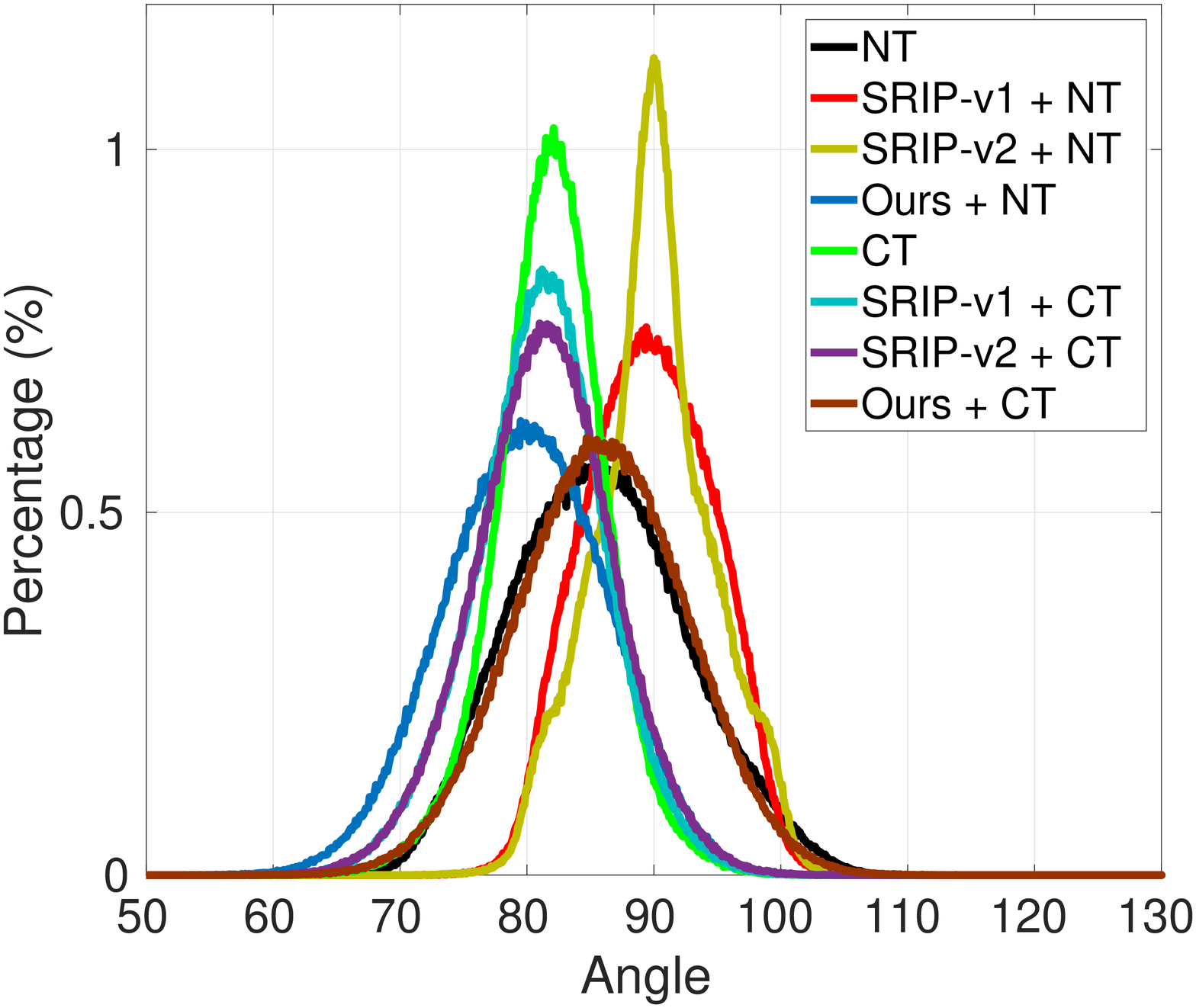}}	
% 			\centerline{\footnotesize $SVHN$}	
		\end{center}
	\end{minipage}
	\vspace{-5mm}		
	\caption{The learned angular distributions on ModelNet40: {\bf (left->right)} default setting, $lr=0.01$, $ts=440$, and $bs=2$.}
	\label{fig:learned_angular}
	\vspace{-3mm}
\end{figure*}

\section{Experiments}\label{sec:exp}
We study the impact of OR on deep learning using the task of  point cloud classification.

{\bf Learning Scenarios:}
In order to identify the gain of OR, we intentionally design some extreme learning scenarios where we only change one hyper-parameter from the default setting while keeping the rest unchanged, %and list them as
listed as follows:
\begin{itemize}[leftmargin=4mm, nosep]
    \item {\em Default Setting:} Under this setting, we train each model using the full training data set and then test it on the full testing data set. We keep all the hyper-parameters unchanged in the public code. 
    \item {\em Large Initial Learning Rate ($lr$):} Here we only change the initial learning rate to 0.01.
    \item {\em Limited Training Samples ($ts$):} Here we only use small number of training samples uniformly selected from the entire training set at random. 
    \item {\em Small Batch Size ($bs$):} Here we only change the batch size to 2. 
\end{itemize}

{\bf Baselines:}
In order to do comparison fairly, we consider the OR algorithms whose code is publicly available and can run in our task. In this sense, we finally have the following baseline algorithms\footnote{We fail to integrate OrthoReg \cite{rodriguez2016regularizing} with our point cloud classification task, neither using their code nor reimplementing it, because we realize that it is so difficult to modify SGD to achieve reasonable performance or even make it work.}:

\begin{itemize}[leftmargin=4mm, nosep]
    \item {\em Naive Training (NT):} In this baseline, we train a network without any regularization.
    \item {\em Conventional Training (CT):} In this baseline, we train a network with weight decay (5e-4), dropout (keep-ratio 0.7), and batch normalization (BN).
    \item {\em Spectral Restricted Isometry Property~\cite{Bansal2018nips} (SRIP-v1 \& SRIP-v2)\footnote{From~\cite{srip_github_link},  %\url{https://github.com/nbansal90/Can-we-Gain-More-from-Orthogonality/},
    we find there are two implementations of the approach, under the folders ``Wide-Resnet'' and ``SVHN'', respectively. We therefore name them as ``v1'' and ``v2''.}:} This regularizer is defined as $\mathcal{R} = \lambda\cdot\sigma\left(\mathbf{W}^T\mathbf{W}-\mathbf{I}\right)$,
    where $\mathbf{W}$ is the network weight matrix, $\mathbf{I}$ is an identity matrix, $\sigma(\cdot)$ is the spectral norm, and $\lambda$ is a constant. Currently this is the state-of-the-art OR in the literature.
    % \item[B4.] {\em Orthogonal Regularizer (OrthoReg) \cite{rodriguez2016regularizing}\footnote{\url{https://prlz77.github.io/iclr2017-paper/}}:} This regularizer is defined as follows:
    % \begin{align}
    %     \mathcal{R} = \sum_{i\neq j}^n \log\left(1+\exp\{\lambda(\cos(\mathbf{w}_i, \mathbf{w}_j)-1)\}\right),
    % \end{align}
    % where $\cos(\mathbf{w}_i, \mathbf{w}_j), \forall i, j$ denotes the cosine similarity between the two weight vectors.
    % \item[B5.] {\em Orthogonal Weight Normalization (OWN) \cite{huang2017orthogonal}\footnote{\url{https://github.com/huangleiBuaa/OthogonalWN}}:} This approach tries to solve the following problem:
    % \begin{align}
    %     \min_{\mathbf{P}} & \quad tr\left((\mathbf{W}-\mathbf{V}_C)(\mathbf{W}-\mathbf{V}_C)^T\right),     \nonumber \\
    %     \mbox{s.t.} & \quad \mathbf{W}=\mathbf{P}\mathbf{V}_C, \mathbf{W}\mathbf{W}^T=\mathbf{I},
    % \end{align}
    % where $\mathbf{W}$ is a network weight matrix, $\mathbf{V}_C$ is a proxy parameter matrix, $tr$ is the trace operator. For more details please refer to the paper. 
\end{itemize}

In the sequel we will present our results for different types of networks on different data sets.

\subsection{Multilayer Perceptron (MLP): PointNet}\label{ssec:pointnet}

{\bf Data Sets:}
We conduct comparison on two benchmark data sets, %\ie
ModelNet40~\cite{wu20153d} and 2D point clouds of MNIST~\cite{lecun1998gradient}. ModelNet40 has 12,311 CAD models for 40 object categories, split into 9,840 for training, 2,468 for testing. We uniformly sample 1024 points from meshes to obtain 3D point clouds, and normalize them into a unit ball. MNIST is a handwritten digit data set, consisting of 60,000 training images and 10,000 testing images with $28\times28=784$ gray pixels. We convert each image to 2D point clouds by taking image coordinates of all non-zero pixels. 

{\bf Networks:}
We choose PointNet-vanilla~\cite{Qi_2017_CVPR} (\ie an MLP(64, 64, 64, 128, 1024, 512, 256, 40) without T-nets) as the backbone network and integrate different regularization techniques into it to verify the effects of OR. During training we conduct data augmentation on-the-fly by randomly rotating the points along the up-axis and jittering the coordinates of each point by a Gaussian noise with zero mean and 0.01 std. We use the PyTorch code~\cite{pointnet_github_link}
%\footnote{from \url{https://github.com/meder411/PointNet-PyTorch}} 
as our testbed and the same training and evaluation protocols as the original PointNet code. The optimizer is Adam, initial learning rate is 0.001, and momentum is 0.9. The decay rate for batch normalization starts with 0.5, and is gradually increased to 0.99, dropout is set at a ratio of 0.7 until the last fully connected layer. The batch size on ModelNet40 is 32, and on MNIST is 100. We tune each approach to report the best performance.

In order to demonstrate that our findings are common across different MLP, on MNIST we slightly modify the architecture of original PointNet to another MLP(64, 64, 64, 128, 512, 256, 128, 10).

{\bf Training Stability:}
%In this experiment, we follow the public code with the default setting where Adam optimizer is used with initial learning rate of 0.001, momentum 0.9, and batch size 32. The decay rate for batch normalization starts with 0.5, and is gradually increased to 0.99, dropout is set at a ratio of 0.7 until the last fully connected layer. 
We summarize the training and testing behaviors of each approach on ModelNet40 in Fig. \ref{fig:4_cases_modelnet40}. {\bf (a)}. Under the default setting, our regularizer helps the naive training converge much faster than the others which are appreciated, while with the conventional training it seems such effect is neutralized by other regularizations. In testing, the overall behavior of each approach is similar to each other without significant performance gap. {\bf (b)} Under the setting of $lr=0.01$, naive training and both SRIPs fail to work, while our regularizer with conventional training works reasonably well, it converges faster than others in training and achieves the best performance. {\bf (c)} Under the setting of $ts=440$, the performances of different algorithms differ significantly, even though all the competitors work. Our regularizer with conventional training achieves the best performance, converging around 100 epochs which is much faster than the others. Our regularizer outperforms SRIP significantly as well. {\bf (d)} Under the setting of $bs=2$, only our regularization with naive training works, and surprisingly achieves the very good performance similar to that under the default setting. For conventional training, the regularization dominates the training behavior so strongly that even our regularizer cannot make it work. This is expected as usually conventional regularization techniques cannot work well using very small batch size. In Fig. \ref{fig:4_cases_mnist} we summarize the training and testing behaviors of each approach on MNIST. Similar observations can be made here in both training and testing. %For instance, under the setting of $bs=2$ we can see clearly that both SRIP-V2 and our self-regularization significantly improve the performance of the naive training. %Under the setting of $ts=200$, however, OR harms the naive training.

In summary, for MLP we do not observe any significant gain on accuracy using SRIP, statistically speaking on average, but some improvement on convergence in training under the default setting. Our self-regularization, however, improves both naive training and conventional training significantly on both convergence and accuracy, indicating better training stability from our method.

{\bf Accuracy:}
We summarize in Table \ref{tab:3D_classification_table} and Table \ref{tab:2D_mnist_point_cloud_table} the best accuracy of each approach per setting in Fig. \ref{fig:4_cases_modelnet40} and Fig. \ref{fig:4_cases_mnist}, respectively. Compared with other regularization techniques, we conclude that our regularizer can work well not only under well-defined setting but also under extreme learning cases. For instance, the naive training with our self-regularization works as well as, or even better than, the conventional training. These observations imply our regularizer has much better generalization ability for optimizing deep networks, potentially leading to broader applications.

{\bf Angular Distributions:}
We illustrate the learned angular distributions in Fig. \ref{fig:learned_angular}. First of all, it seems that all the well-trained models under the default setting form Gaussian-like distributions with mean around \ang{90}. The variances of these distributions, however, are larger than those in Fig.~\ref{fig:angle_dist}. We conjecture that the main reason for this is that in PointNet the input dimension is usually smaller than the number of filters per hidden layer so that it is hard to achieve (near) orthogonality among all the filters. Next, our self-regularization with naive training always learns Gaussian-like distributions, while conventional training sometimes has negative effects on our learned distributions such as shifting. This is probably due to BN as it normalizes the statistics in gradients. Finally, it seems that spike-shape distributions tend to produce bad models. This observation has also been made in the angular distributions with random weights.

\begin{figure*}[t]
    \subfigure{\includegraphics[width=0.25\linewidth]{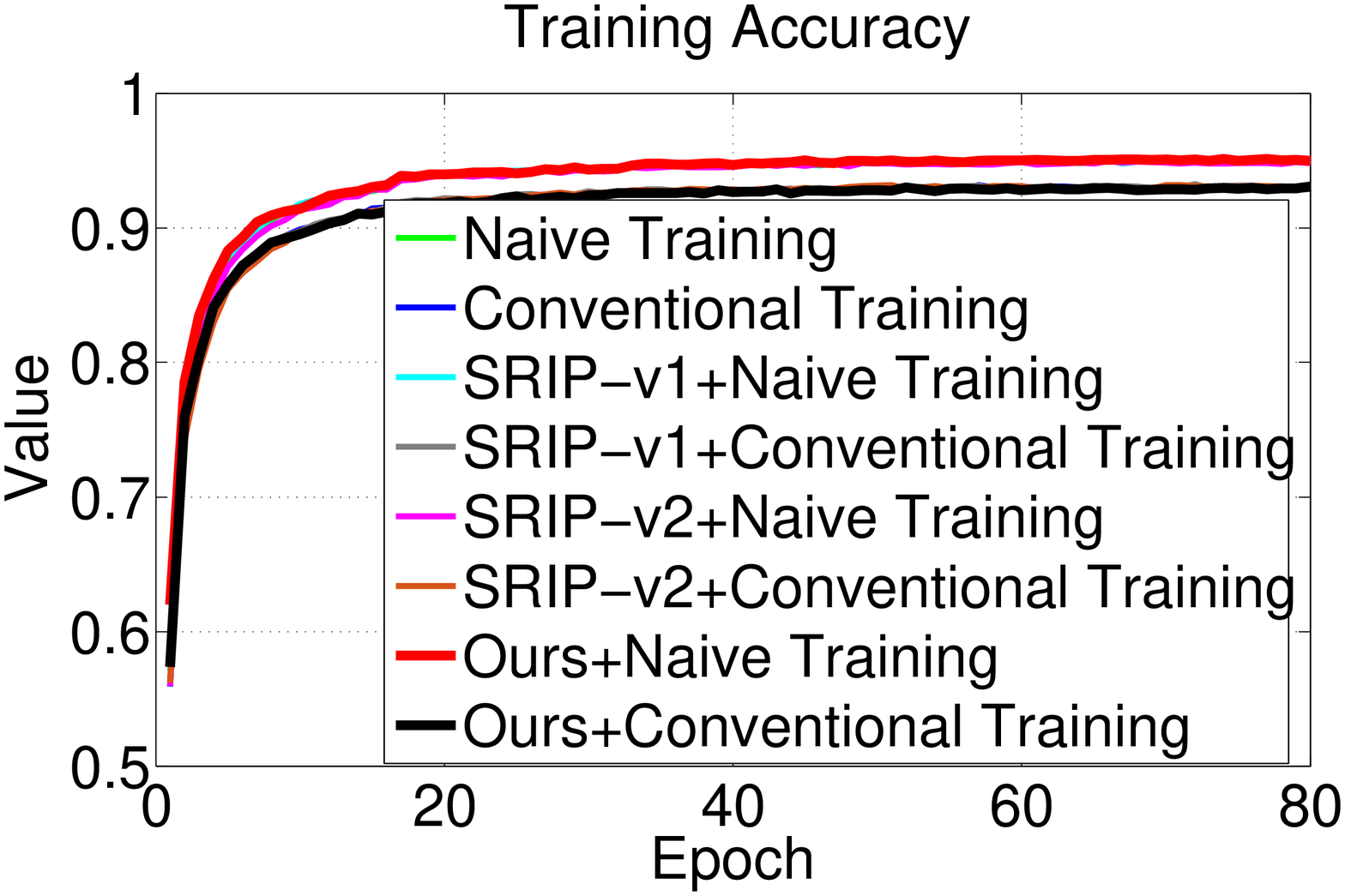}}\hfill
    \subfigure{\includegraphics[width=0.25\linewidth]{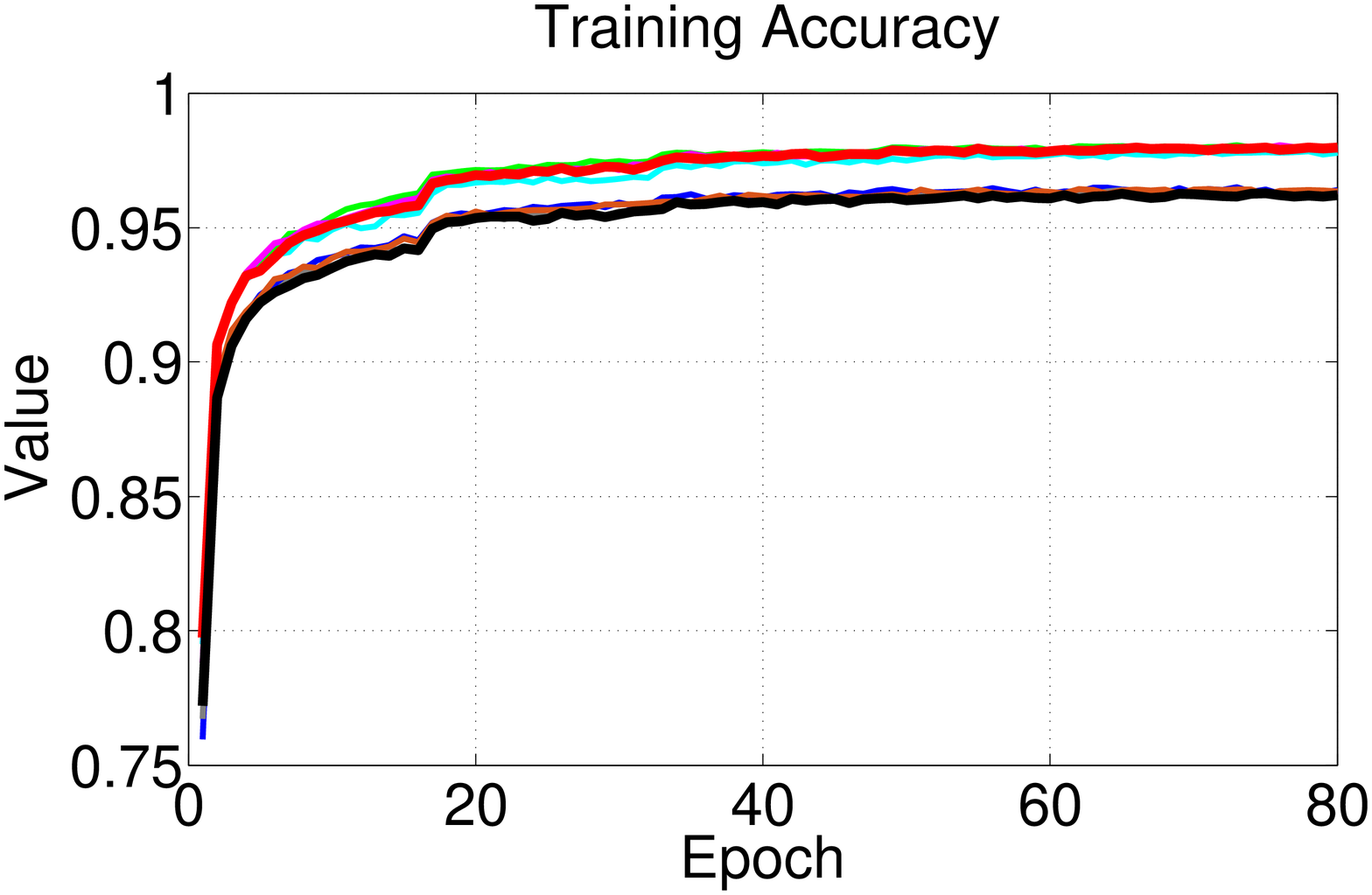}}\hfill
    \subfigure{\includegraphics[width=0.25\linewidth]{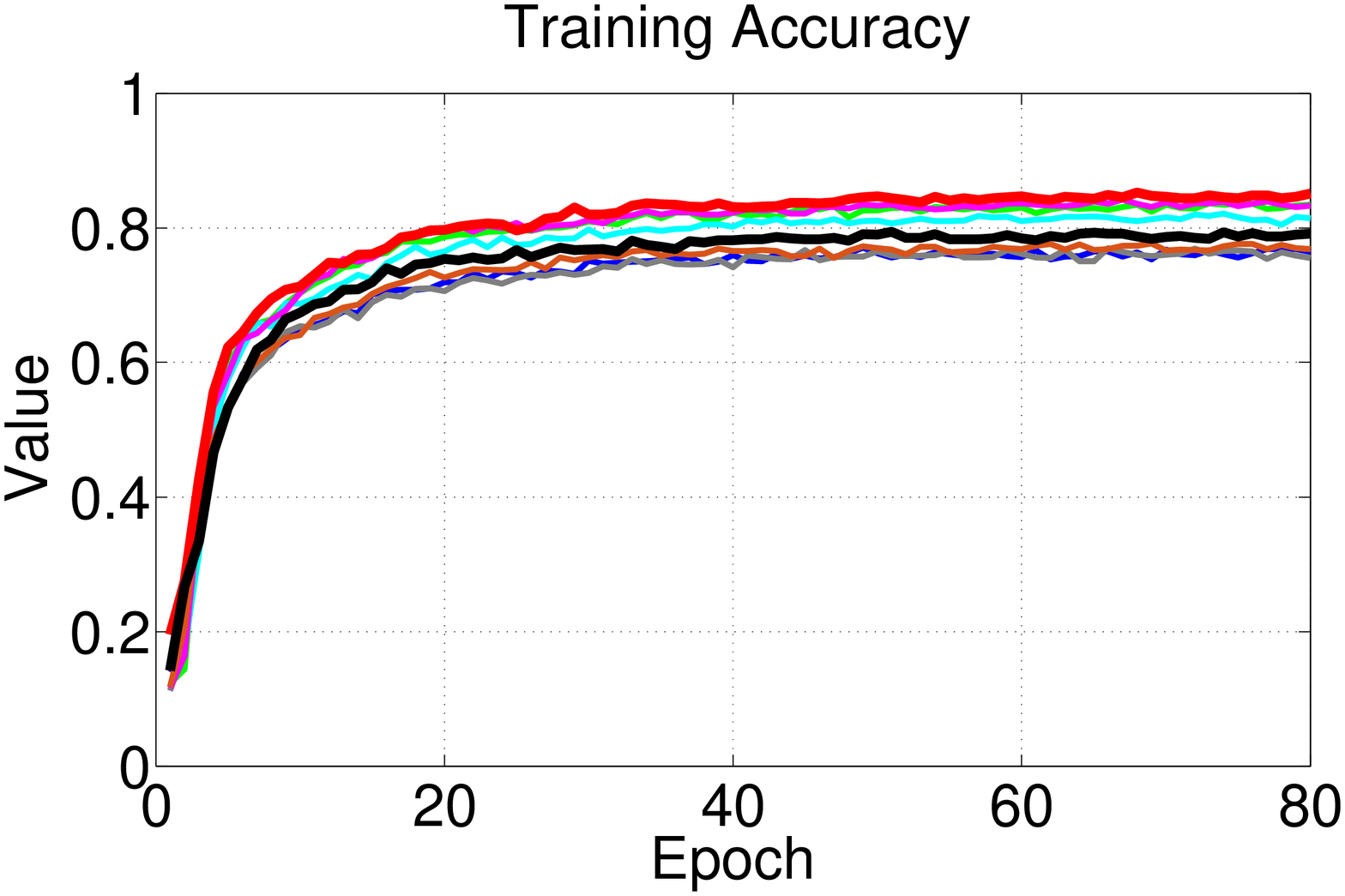}}\hfill
    \subfigure{\includegraphics[width=0.25\linewidth]{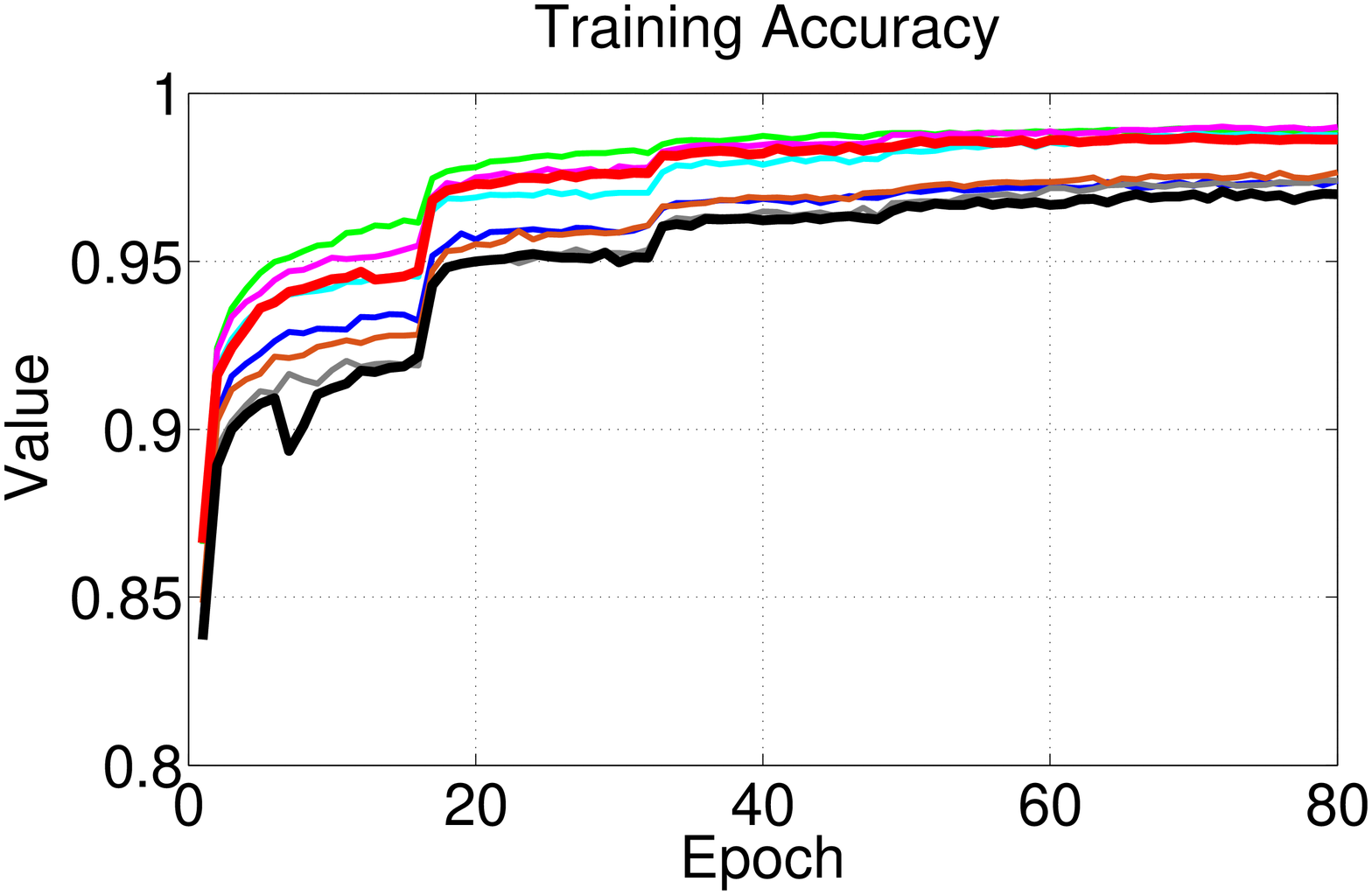}}\hfill
    \subfigure{\includegraphics[width=0.25\linewidth]{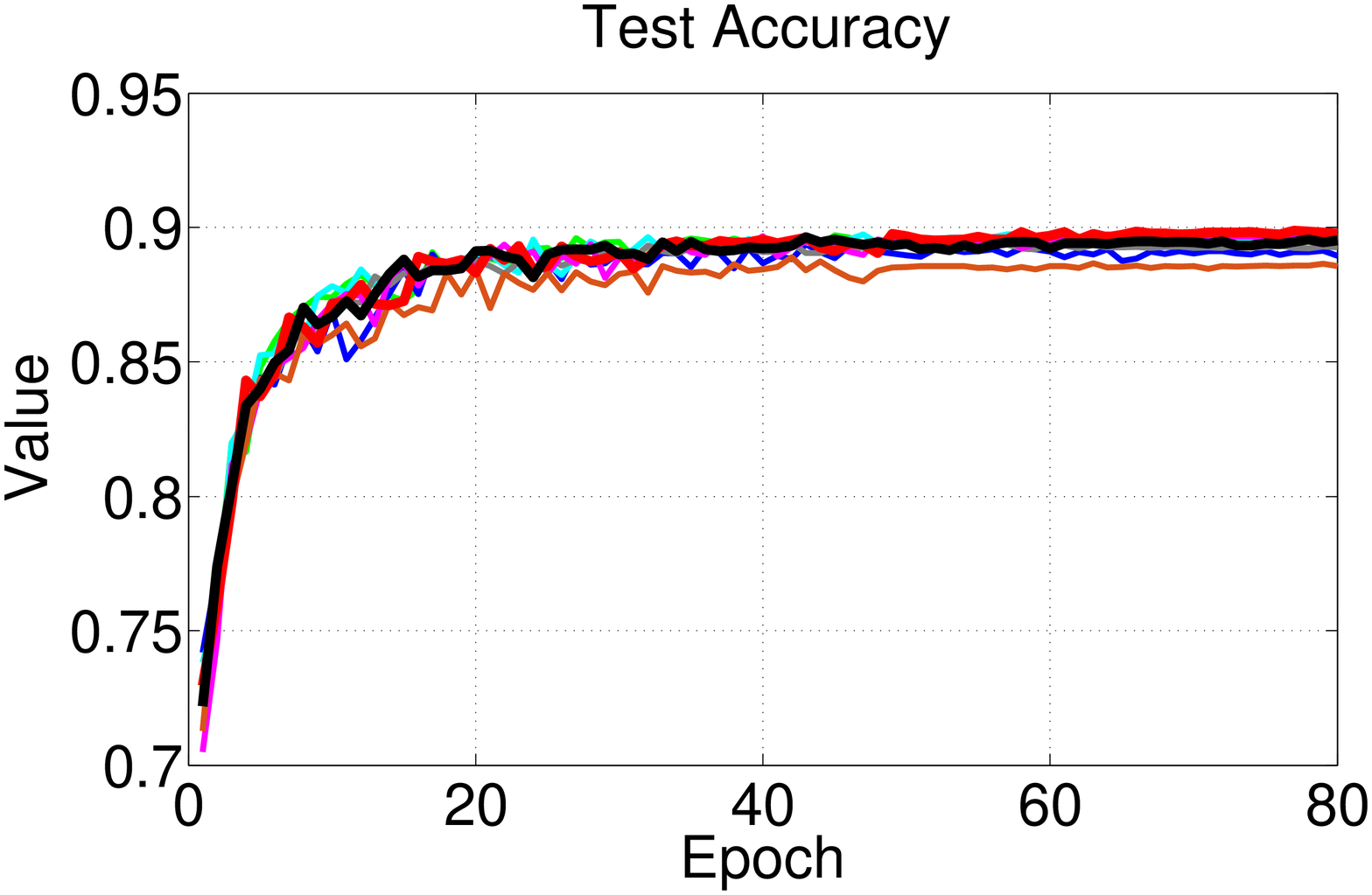}}\hfill
    \subfigure{\includegraphics[width=0.25\linewidth]{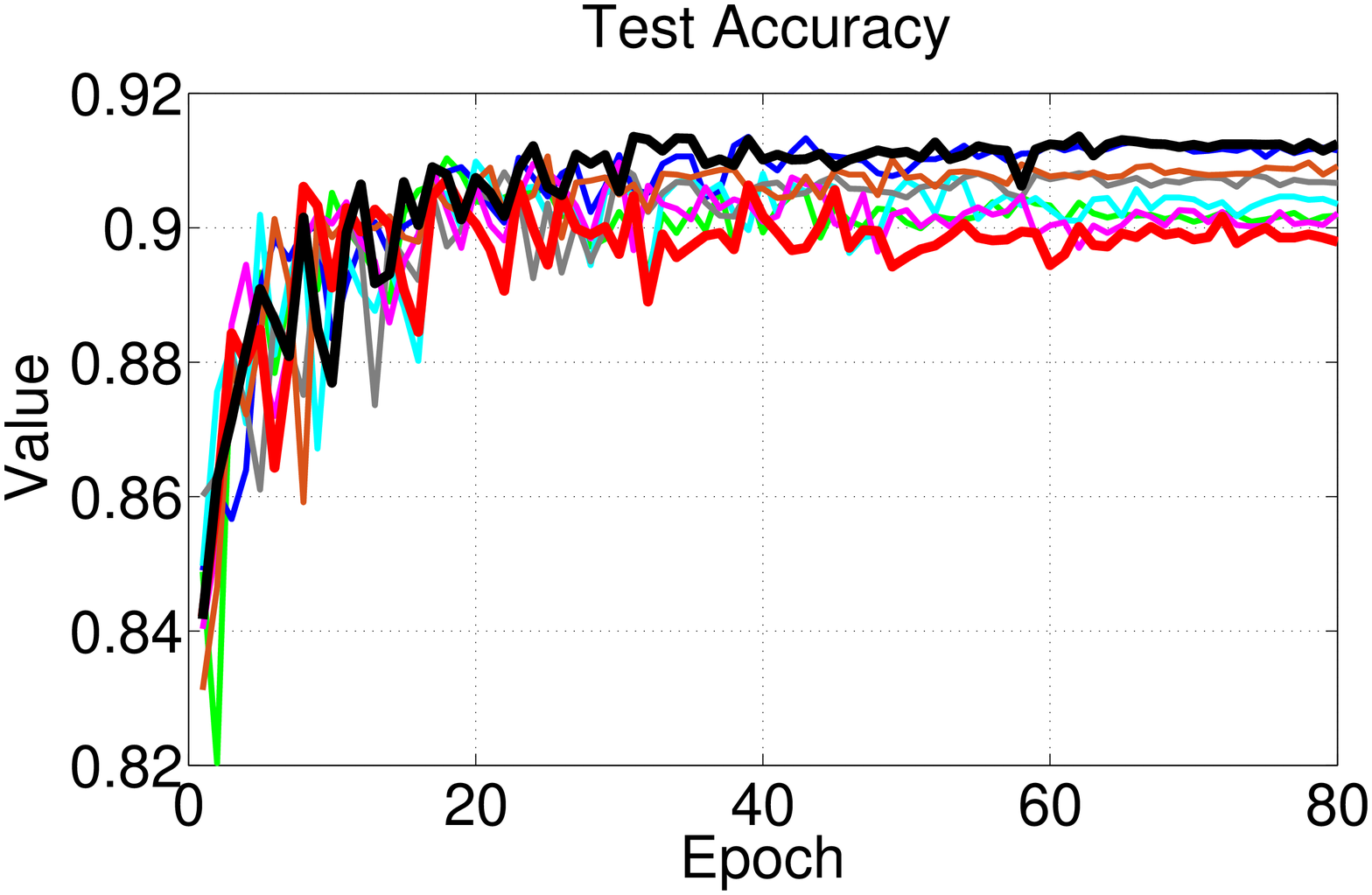}}\hfill
    \subfigure{\includegraphics[width=0.25\linewidth]{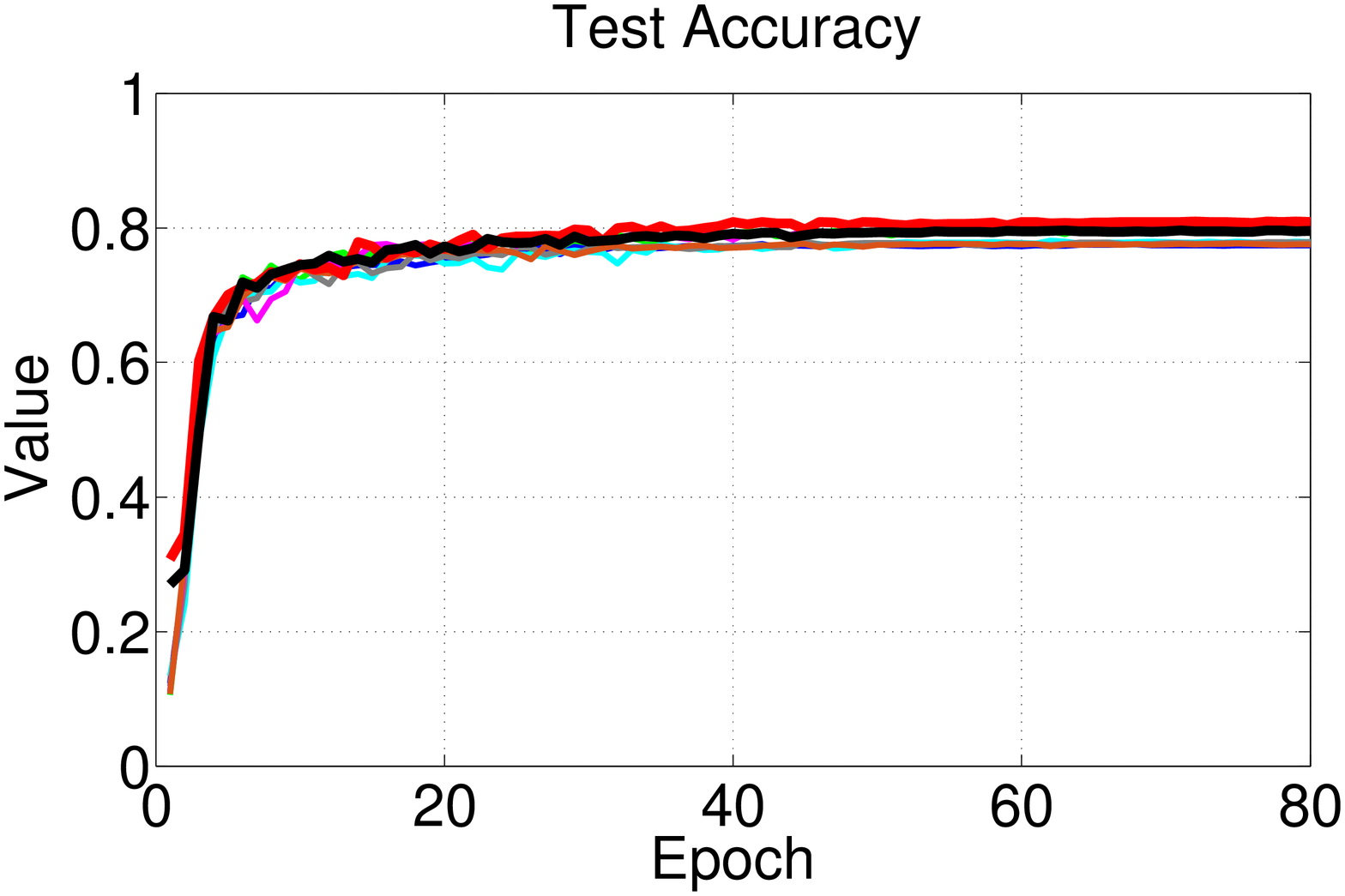}}\hfill
    \subfigure{\includegraphics[width=0.25\linewidth]{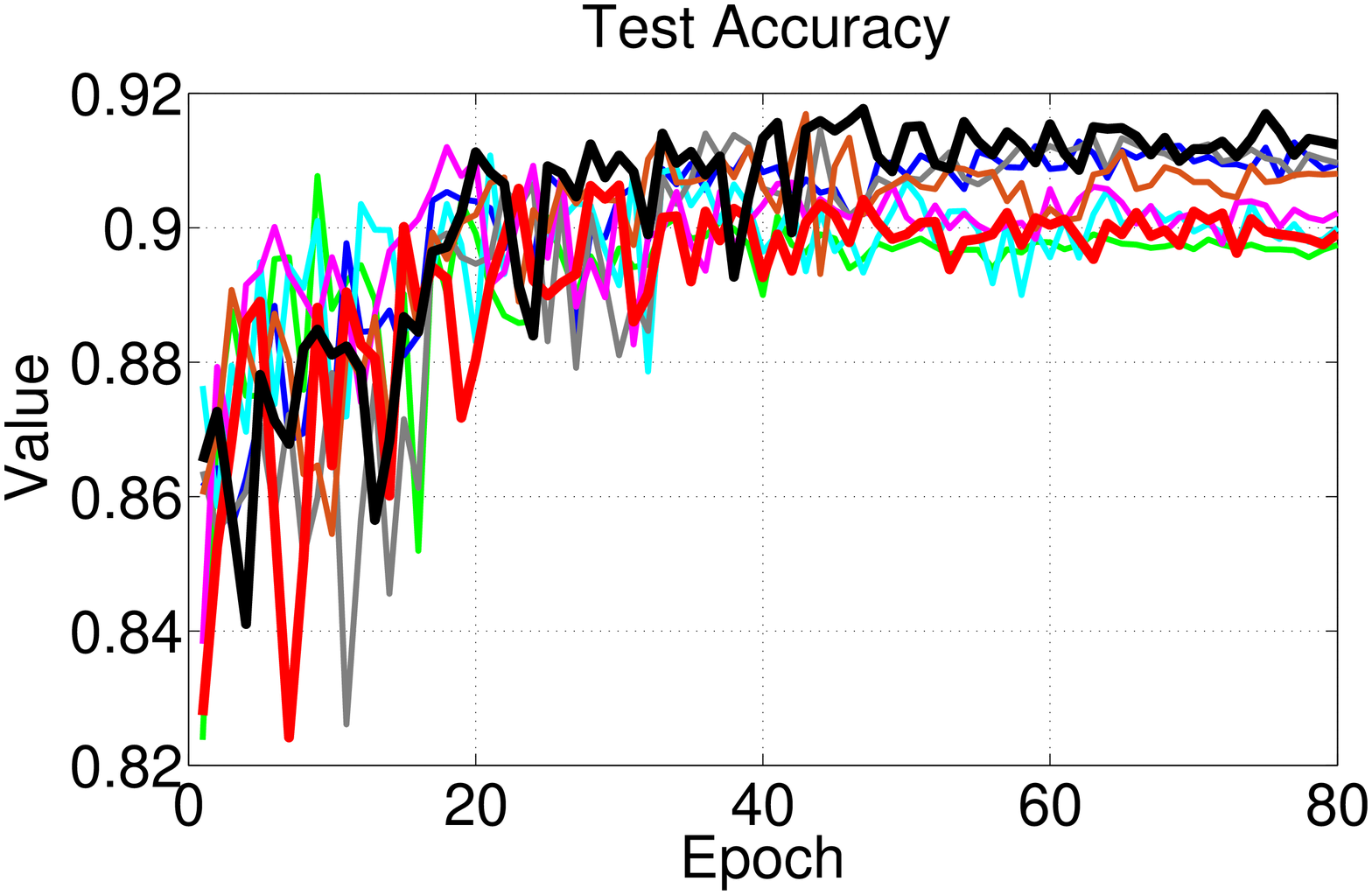}}\hfill
    \caption{Result comparison on ModelNet10: {\bf (left->right)} default setting, $lr=0.01$, $ts=400$, and $bs=2$.}
	\label{fig:ModelNet10_voxnet_4_cases}
 	\vspace{3mm}
\end{figure*}

\begin{figure*}[t]
    \subfigure{\includegraphics[width=0.25\linewidth]{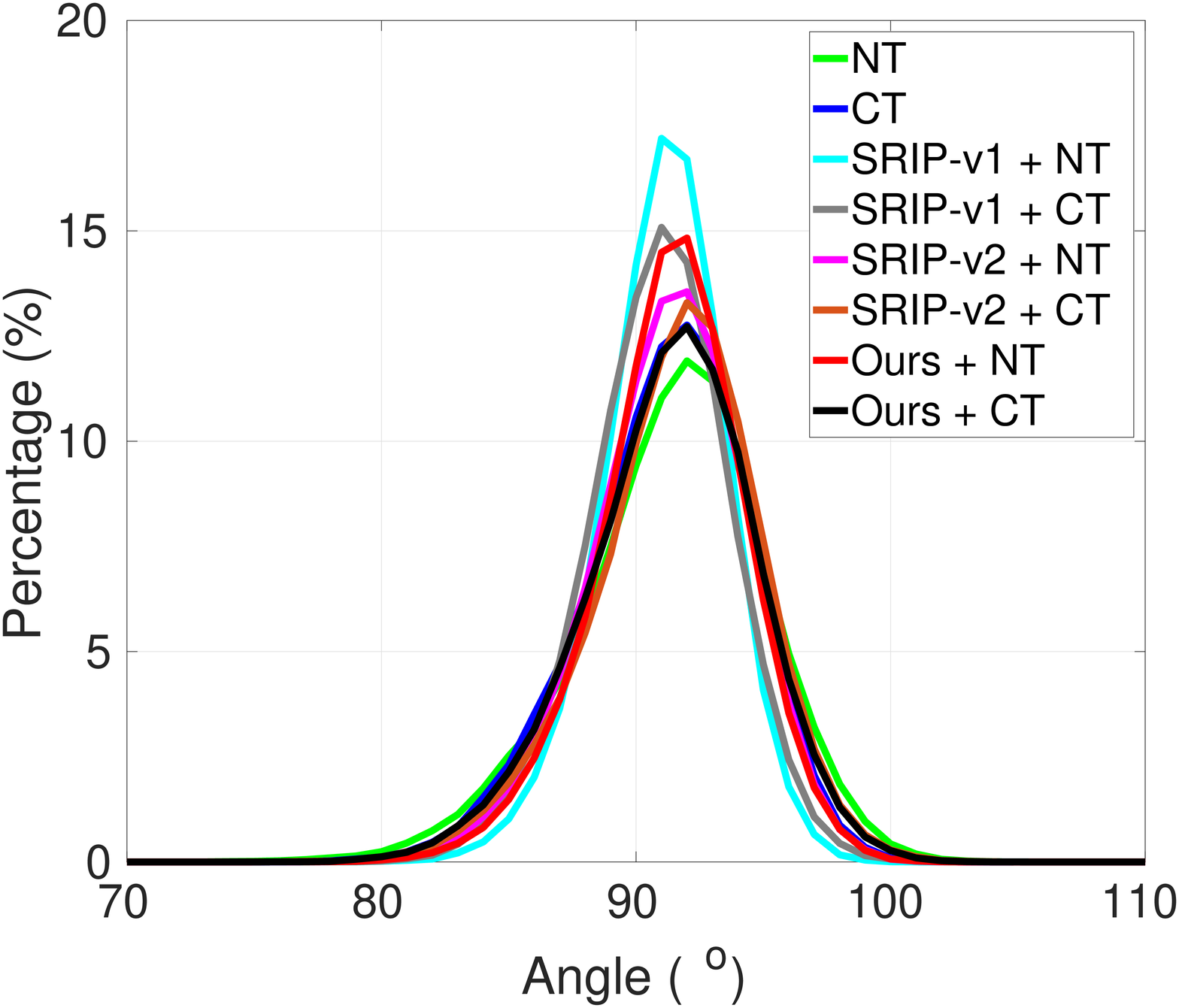}}\hfill
    \subfigure{\includegraphics[width=0.25\linewidth]{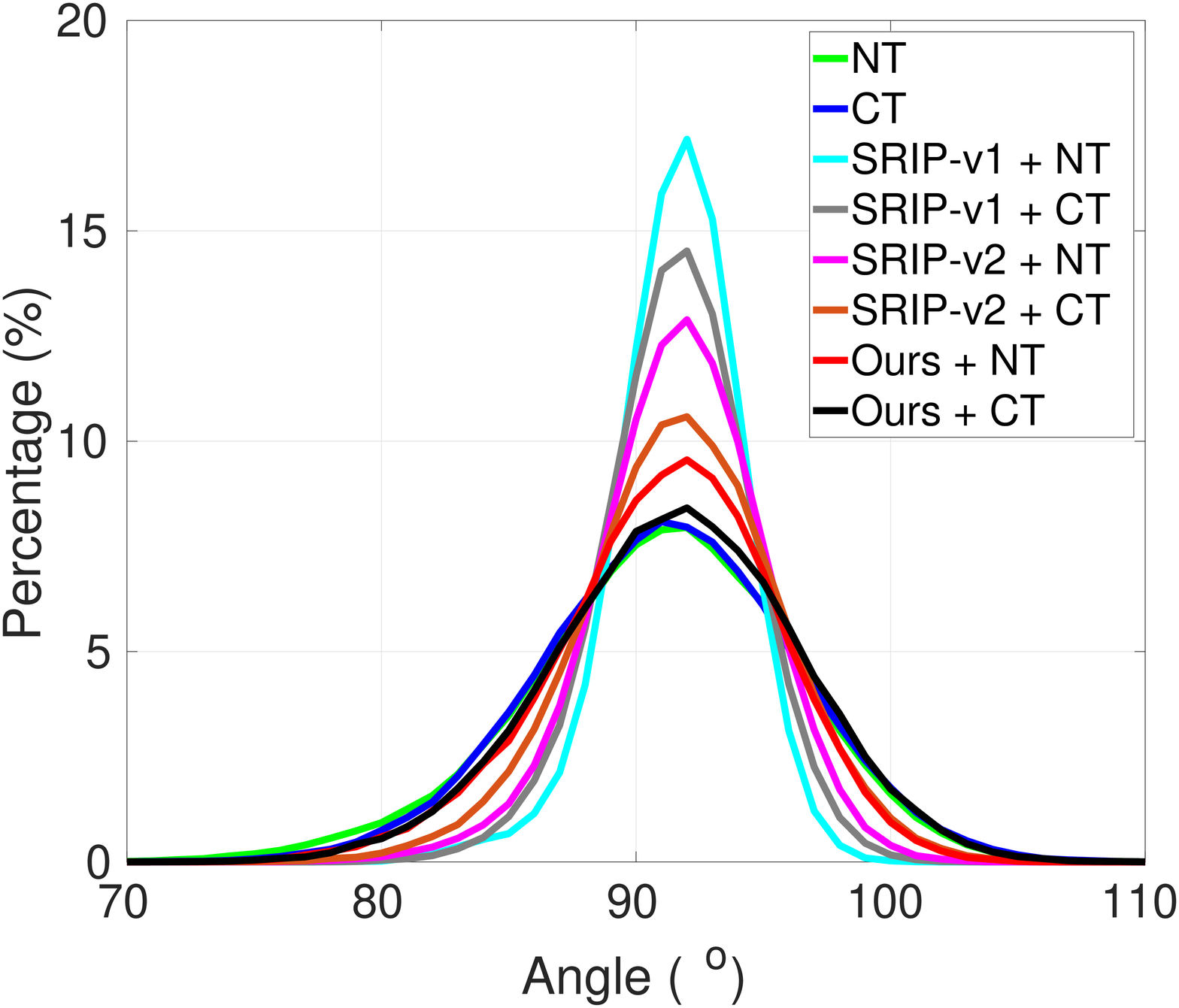}}\hfill
    \subfigure{\includegraphics[width=0.25\linewidth]{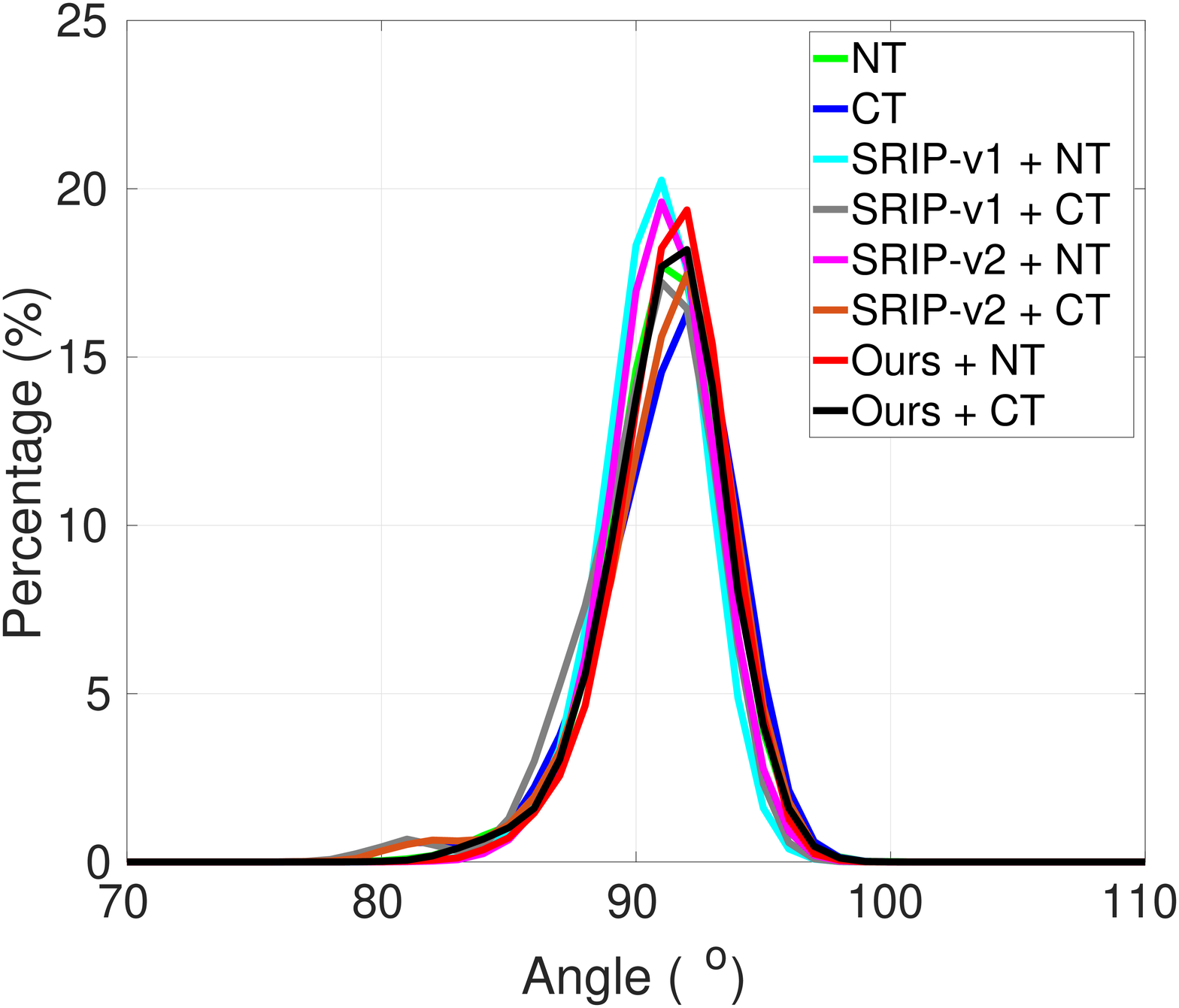}}\hfill
    \subfigure{\includegraphics[width=0.25\linewidth]{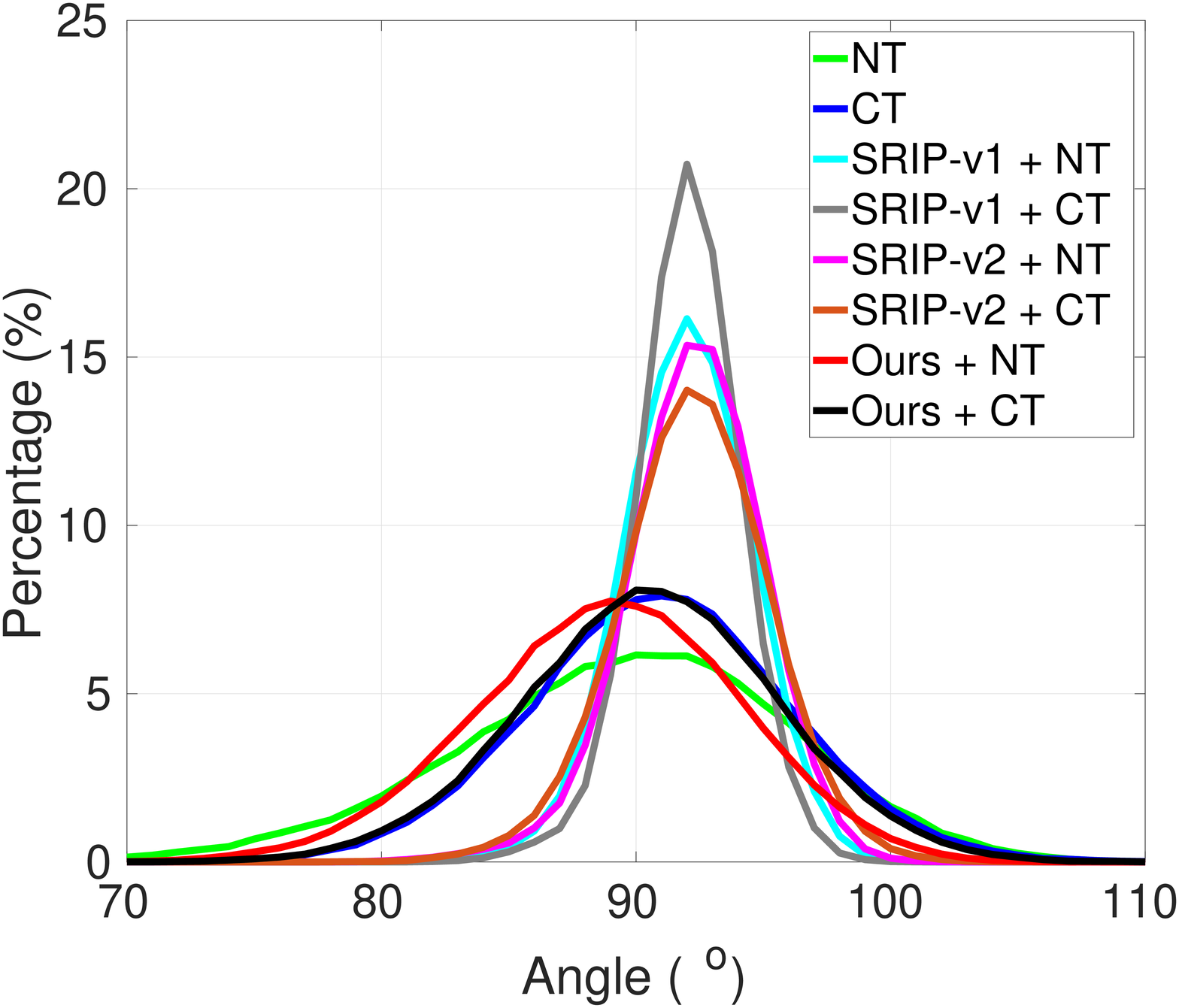}}\hfill
    \caption{The learned angular distributions on ModelNet10: {\bf (left->right)} default setting, $lr=0.01$, $ts=400$, and $bs=2$.}
	\label{fig:ModelNet10_distributions}
	\vspace{3mm}
\end{figure*}

\subsection{Convolutional Neural Networks: VoxNet}\label{ssec:pointnet}

%\begin{wraptable}{r}{.5\linewidth}
\begin{table}[t]
	%\vspace{-10pt}
	\begin{minipage}[b]{\linewidth}
	    \centering\footnotesize
    	\setlength\tabcolsep{2pt}
    	\begin{tabular}{|c|c|c|c|c|c|}
    		\hline
    		& def. s.  & $lr(0.01)$ & $ts(400)$ & $bs(2)$ & ave. \\ \hline		
    		NT           & 0.898 & \textbf{0.910}  & 0.797  & 0.908   & 0.878   \\ \hline   
    		SRIP-v1+NT  & 0.898 & \textbf{0.910}  & 0.781  & 0.911   & 0.875  \\ \hline
    		SRIP-v2+NT & 0.897 & \textbf{0.910}  & 0.800  & \textbf{0.912}   & 0.880  \\ \hline 
    		{\bf Ours+NT}         & \textbf{0.899} & 0.907  & {\bf 0.810}  & 0.906   & {\bf 0.881}   \\ \hline \hline
    		CT             & 0.894 & \textbf{0.914}  & 0.776  & 0.913   & 0.874  \\ \hline
    		SRIP-v1+CT  & 0.894 & 0.909  & 0.779  & 0.915   & 0.874  \\ \hline
    		SRIP-v2+CT & 0.889 & 0.911  & 0.776  & 0.917   & 0.873  \\ \hline 
    		{\bf Ours+CT}               & \textbf{0.896} &\textbf{0.914} & \textbf{0.796}   & \textbf{0.918}   & {\bf 0.881}   \\ \hline
    	\end{tabular}
    % 	\vspace{2mm}
    	\caption{Best test accuracy comparison on ModelNet10.}
    	\label{tab:ModelNet10_voxnet_4_cases_table}
	\end{minipage}
	\vspace{-20pt}
%\end{wraptable}
\end{table}

%\begin{wraptable}{r}{.275\linewidth}
\begin{table}[t]
	%\vspace{-10pt}
	\begin{minipage}[b]{\linewidth}
	    \centering\footnotesize
    	\setlength\tabcolsep{2pt}
        \begin{tabular}{|c||c|c|}
        \hline
                                & lr   & acc. \\ \hline
            NT        & 0.001  & 0.888     \\ \hline   
            SRIP-v1+NT   & 0.001  & 0.893     \\ \hline
            OrthoReg+NT   & -    & -    \\ \hline
            {\bf Ours+NT}   & 0.001  & \bf 0.898    \\ \hline\hline
            CT        & 0.1    & 0.963    \\ \hline
            SRIP-v1+CT       & 0.1    & 0.962    \\ \hline
            OrthoReg+CT   & 0.1    & 0.962    \\ \hline
            OMDSM+CT$^{*}$ & 0.1 & 0.963    \\ \hline
            {\bf Ours+CT}       & 0.1    & \bf 0.965   \\ \hline
        \end{tabular}
        % \vspace{1mm}
        \caption{Test accuracy comparison on CIFAR-10. Where ``-'' indicates the model does not converge, ``*'' indicates OMDSM obtains best performance with dropout=0.3 while keeping all other CT settings unchanged.}
        \label{tab:cifar}
	\end{minipage}
	\vspace{-20pt}
%\end{wraptable}
\end{table}

\textbf{Data Set:}
We use Modelnet10~\cite{maturana2015voxnet} for our comparison. In the data set there are 3D models as well as voxelized versions, which have been augmented by rotating in 12 rotations. We use the provided voxelizations and follow the train/test splits for evaluation. 

\textbf{Networks:}
We use VoxNet~\cite{maturana2015voxnet} for comparison, which is a network architecture to efficiently dealing with large amount of point cloud data by integrating a volumetric occupancy grid representation into a supervised 3D CNN. We use the PyTorch code~\cite{voxnet_github_link}
%\footnote{from \url{https://github.com/Durant35/VoxNet}}
as our testbed, and tune each approach to report the best accuracy averaged per class. In the default setting, we use SGD with momentum 0.9, training epoch 80, and batch size 128. The initial learning rate is 0.001 and multiplied by 0.31 every 16 epochs.

\textbf{Training Stability \& Accuracy:}
We illustrate the training and testing behavior of each algorithm on ModelNet10 in Fig.~\ref{fig:ModelNet10_voxnet_4_cases}.  
%In this experiment, the default setting of the public code is with SGD as the optimizer, where learning rate is 0.001 and decreased by a factor of 10 each 40000 batches,  and momentum is 0,9, batch size 32.  
As we can see, different to the case of PointNet, there is no significant difference in both training and testing for VoxNet. To further verify the performance, we summarize the best test accuracy in Table \ref{tab:ModelNet10_voxnet_4_cases_table}, where our improvement is marginal. Interestingly, we find similar observations in image classification that all the OR algorithms perform equally well, and no obvious advantage over conventional training with careful tuning. For instance, we list our classification results of baselines, OrthoReg~\cite{rodriguez2016regularizing}, and OMSDM~\cite{huang2017orthogonal} on CIFAR-10~\cite{krizhevsky2014cifar} based on Wide ResNet~\cite{zagoruyko2016wide} with width of 28, depth of 10 in Table~\ref{tab:cifar}.

In summary, we observe that OR is more useful for MLP than for CNNs to improve the training stability. We believe one of the key reasons is the filters in CNNs are much better structured due to the input data such as images and 3D volumes, so that learning orthogonal filters becomes unnecessary. In contrast, the input data for MLP is much less structured individually where orthogonal filters can better cover feature space. In both cases, however, our self-regularization outperforms the state-of-the-art OR algorithms.

\textbf{Angular Distributions:}
We also illustrate the angular distributions of learned models on MondelNet10 in Fig.~\ref{fig:ModelNet10_distributions}. Again all the distributions form Gaussian-like shapes with mean close to $\ang{90}$ and relatively small variance. Together with Fig.~\ref{fig:learned_angular}, we conclude that angular distributions may not be a good indicator for selecting good deep models, but their statistics are good for self-regularization.

\section{Conclusion}
In this paper, we manage to identify that the real gain of orthogonality regularization (OR) in deep learning is to better stabilize the training, leading to faster convergence and better generalization. In terms of accuracy, existing OR algorithms perform no better than the conventional training algorithm with weight decay, dropout, and batch normalization, statistically speaking. Instead, we propose a self-regularization method as an architectural plug-in that can be easily integrated with an arbitrary network to learn orthogonal filters. We utilize LSH to compute the filter angles approximately based on the filter responses, and push the mean and variance of such angles towards $\ang{90}$ and $\ang{0}$, respectively. Empirical results on point cloud classification demonstrate the superiority of self-regularization.

%\newpage
{\small
\bibliographystyle{ieee}
\bibliography{egbib}
}
\balance

\end{document}